\documentclass{article}



\usepackage[preprint, nonatbib]{nips_2018}



\usepackage[numbers]{natbib}
\usepackage[utf8]{inputenc} 
\usepackage[T1]{fontenc}    
\usepackage{hyperref}       
\usepackage{url}            
\usepackage{booktabs}       
\usepackage{amsfonts}       
\usepackage{nicefrac}       
\usepackage{microtype}      
\usepackage{amsmath}
\usepackage{amsthm}
\usepackage{xcolor}
\usepackage{graphicx}
\usepackage{tabularx}
\usepackage{algpseudocode}
\usepackage{algorithm}
\usepackage{algorithmicx}
\usepackage{tikz}
\usepackage{enumerate}
\usepackage{float}
\usepackage{placeins}
\usepackage{authblk}

\usetikzlibrary{shadings}
\usetikzlibrary{positioning}
\usetikzlibrary{calc}
\usetikzlibrary{fit}
\usetikzlibrary{backgrounds}
\usetikzlibrary{shapes.geometric}
\usetikzlibrary{patterns,snakes}

\title{Observe and Look Further:\\Achieving Consistent Performance on Atari}

%

\author[1]{\textbf{Tobias Pohlen}}
\author[1]{\textbf{Bilal Piot}}
\author[1]{\textbf{Todd Hester}}
\author[1]{\textbf{Mohammad Gheshlaghi Azar}}
\author[1]{\textbf{Dan Horgan}}
\author[1]{\textbf{David Budden}}
\author[1]{\textbf{Gabriel Barth-Maron}}
\author[1]{\textbf{Hado van Hasselt}}
\author[1]{\textbf{John Quan}}
\author[1]{\textbf{Mel Večerík}}
\author[1]{\textbf{Matteo Hessel}}
\author[1]{\textbf{Rémi Munos}}
\author[2]{\textbf{Olivier Pietquin}}
\affil[1]{DeepMind, \texttt{\{pohlen, piot, toddhester, mazar, horgan, budden, gabirelbm, hado, johnquan, vec, mtthss, munos\}@google.com}}
\affil[2]{Google Brain, \texttt{pietquin@google.com}}


\newtheorem{prop}{Proposition}[section]
\newtheorem{lemma}{Lemma}[section]

\newcommand{\ie}[0]{\emph{i.e.,}~}

\newcommand{\argmax}{\operatorname*{argmax}}
\newcommand{\argmin}{\operatorname*{argmin}}

\newcommand\RotText[1]{\rotatebox{90}{\centering#1}}
\DeclareRobustCommand\linereference[1]{\tikz[baseline=-0.5ex]{\draw[color=#1, line width=1pt] (0, 0) -- (0.5, 0);}}
\definecolor{dmgray}{RGB}{204,204,204}
\newcolumntype{C}{>{\centering\arraybackslash}X}
\newcolumntype{R}{>{\raggedleft{}\arraybackslash}X}
\newcolumntype{?}{!{\vrule width 1pt}}
\newcolumntype{-}{!{\color{dmgray}\vrule}}

\definecolor{dmlightgray}{RGB}{80, 80, 80}
\definecolor{dmblue}{rgb}{0.2980392156862745, 0.4470588235294118, 0.6901960784313725}
\definecolor{dmgreen}{rgb}{0.3333333333333333, 0.6588235294117647, 0.40784313725490196}
\definecolor{dmgreendark}{RGB}{69, 137, 85}
\definecolor{dmred}{rgb}{0.7686274509803922, 0.3058823529411765, 0.3215686274509804}
\definecolor{dmpurple}{rgb}{0.5058823529411764, 0.4470588235294118, 0.6980392156862745}
\definecolor{dmyellow}{rgb}{0.8, 0.7254901960784313, 0.4549019607843137}
\definecolor{dmcyan}{rgb}{0.39215686274509803, 0.7098039215686275, 0.803921568627451}

\tikzset{
	dmnode/.style = {},
	dmrect/.style = {draw, line width=1pt, rounded corners=2pt, rectangle, color=dmlightgray, text=black},
	dmmajor/.style = {minimum width=75pt},
	dmcirc/.style = {circle,},
	dmpath/.style = {draw, line width=1pt, rounded corners=2pt, color=dmlightgray},
	dmarrow/.style = {->, >=stealth},
	dmlabel/.style = {font=\normalsize},
	dmmajor/.style = {draw, dmrect, minimum height=2cm, minimum width=3.5cm},
	dmminor/.style = {draw, dmrect, line width=0.5pt, font=\footnotesize, minimum height=0.5cm},
}

\definecolor{ablation_dqfd_tc}{rgb}{0.3333333333333333, 0.6588235294117647, 0.40784313725490196}
\definecolor{ablation_dqfd_transform}{rgb}{0.39215686274509803, 0.7098039215686275, 0.803921568627451}
\definecolor{ablation_dqn_tc}{rgb}{0.8, 0.7254901960784313, 0.4549019607843137}
\definecolor{ablation_dqfd}{rgb}{0.2980392156862745, 0.4470588235294118, 0.6901960784313725}
\definecolor{ablation_dqn}{rgb}{0.7686274509803922, 0.3058823529411765, 0.3215686274509804}
\definecolor{ablation_dqfd_expert_loss}{rgb}{0.5058823529411764, 0.4470588235294118, 0.6980392156862745}

\newcolumntype{P}[1]{>{\centering\arraybackslash}p{#1}}

\begin{document}

\maketitle
\vspace{-2mm}
\begin{abstract}
Despite significant advances in the field of deep Reinforcement Learning (RL), today's algorithms still fail to learn human-level policies consistently over a set of diverse tasks such as Atari 2600 games. 
We identify three key challenges that any algorithm needs to master in order to perform well on all games: 
processing diverse reward distributions, reasoning over long time horizons, and exploring efficiently. 
In this paper, we propose an algorithm that addresses each of these challenges and is able to learn human-level policies on nearly all Atari games. 
A new transformed Bellman operator allows our algorithm to process rewards of varying densities and scales; an auxiliary \emph{temporal consistency loss} allows us to train stably using a discount factor of $\gamma = 0.999$ (instead of $\gamma = 0.99$) extending the effective planning horizon by an order of magnitude; and we ease the exploration problem by using human demonstrations that guide the agent towards rewarding states. 
When tested on a set of 42 Atari games, our algorithm exceeds the performance of an average human on 40 games using a common set of hyper parameters.
Furthermore, it is the first deep RL algorithm to solve the first level of \textsc{Montezuma's Revenge}.
\end{abstract}

\section{Introduction}
In recent years, significant advances in the field of deep Reinforcement Learning (RL) have led to artificial agents that are able to reach human-level control on a wide array of tasks such as some Atari 2600 games~\cite{bellemare2015arcade}.
In many of the Atari games, these agents learn control policies that far exceed the capabilities of an average human player~\cite{gruslys2018the, hessel2017rainbow, horgan2018distributed}.
However, learning human-level policies consistently across the entire set of games remains an open problem.

We argue that an algorithm needs to overcome three key challenges in order to perform well on all Atari games.
The first challenge is processing diverse reward distributions. 
An algorithm must learn stably regardless of reward density and scale. 
\citet{mnih2015human} showed that clipping rewards to the canonical interval $[-1, 1]$ is one way to achieve stability. 
However, this clipping operation may change the set of optimal policies. 
For example, the agent no longer differentiates between striking a single pin or all ten pins in \textsc{Bowling}. 
Hence, optimizing the unaltered reward signal in a stable manner is crucial to achieving consistent performance across games.
The second challenge is reasoning over long time horizons, which means the algorithm should be able to choose actions in anticipation of rewards that might be far away.
For example, in \textsc{Montezuma's Revenge}, individual rewards might be separated by several hundred time steps.
In the standard $\gamma$-discounted RL setting, this means the  algorithm should be able to handle discount factors close to 1.
The third and final challenge is efficient exploration of the MDP. An algorithm that explores efficiently is able to discover long trajectories with a high cumulative reward in a reasonable amount of time even if individual rewards are very sparse.
While each problem has been partially addressed in the literature, none of the existing deep RL algorithms have been able to address these three challenges at once.

In this paper, we propose a new Deep Q-Network (DQN)~\cite{mnih2015human} style algorithm that specifically addresses these three challenges. 
In order to learn stably independent of the reward distribution, we use a transformed Bellman operator that reduces the variance of the action-value function.
Learning with the transformed operator allows us to process the unaltered environment rewards regardless of scale and density.
We prove that the optimal policy does not change in deterministic MDPs and show that under certain assumptions the operator is a contraction in stochastic MDPs (\ie the algorithm converges to a fixed point) (see Sec.~\ref{sec:transform}).
Our algorithm learns stably even at high discount factors due to an auxiliary \emph{temporal consistency (TC) loss}.
This loss prevents the network from prematurely generalizing to unseen states (Sec.~\ref{sec:tcloss}) allowing us to use a discount factor as high as $\gamma = 0.999$ in practice.
This extends the effective planning horizon of our algorithm by one order of magnitude when compared to other deep RL approaches on Atari.
Finally, we improve the efficiency of DQN's default exploration scheme by combining the distributed experience replay approach of \citet{horgan2018distributed} with the Deep Q-learning from Demonstrations (DQfD) algorithm of \citet{hester2017learning}. 
The resulting architecture is a distributed actor-learner system that combines offline expert demonstrations with online agent experiences (Sec.~\ref{sec:apexdqfd}).

We experimentally evaluate our algorithm on a set of 42 games for which we have demonstrations from an expert human player
(see Table~\ref{tbl:demos}). 
Using the same hyper parameters on all games, our algorithm exceeds the performance of an average human player on 40 games, the expert player on 34 games, and state-of-the-art agents on at least 28 games.
Furthermore, we significantly advance the state-of-the-art on sparse reward games.
Our algorithm is the first to complete the first level of \textsc{Montezuma's Revenge} and it achieves a new top score of 3997 points on \textsc{Pitfall!} without compromising performance on dense reward games and while only using 5 demonstration trajectories.

\vspace{-1mm}
\section{Related work}
\vspace{-1mm}
\label{sec:RelatedWork}
\textbf{Reinforcement Learning with Expert Demonstrations (RLED):}
RLED seeks to use expert demonstrations to guide the exploration process in difficult RL problems. Some early works in this area~\cite{atkeson1997robot, schaal1997learning} used expert demonstrations to find a good initial policy before fine-tuning it with RL. More recent approaches have explicitly combined expert demonstrations with RL data during the learning of the policy or action-value function~\cite{chemali2015direct, kim2013learning,piot2014boosted}.
In these works, expert demonstrations were used to build an imitation loss function (classification-based loss) or max-margin constraints.
While these algorithms worked reasonably well in small problems, they relied on handcrafted features to describe states and were not applied to large MDPs. In contrast, approaches using deep neural networks allow RLED to be explored in more challenging RL tasks such as Atari or robotics. In particular, our work builds upon DQfD~\cite{hester2017learning}, which used a separate replay buffer for expert demonstrations, and minimized the sum of a temporal difference loss and a supervised classification loss.
Another similar approach is Replay Buffer Spiking (RBS)~\cite{lipton2017bbq}, wherein the experience replay buffer is initialized with demonstration data, but this data is not kept for the full duration of the training.
In robotics tasks, similar techniques have been combined with other improvements to successfully solve difficult exploration problems~\cite{nair2017overcoming, vevcerik2017leveraging}.

\textbf{Deep Q-Networks (DQN):} DQN~\cite{mnih2015human} used deep neural networks as function approximators to apply RL to Atari games.
Since that work, many extensions that significantly improve the algorithm's performance have been developed. For example, DQN uses a replay buffer to store off-policy experiences and the algorithm learns by sampling batches uniformly from the replay buffer; instead of using uniform samples, \citet{schaul2015prioritized} proposed prioritized sampling where transitions are weighted by their absolute temporal difference error.
This concept was further improved by Ape-X DQN~\cite{horgan2018distributed} which decoupled the data collection and the learning processes by having many actors feed data to a central prioritized replay buffer that an independent learner can sample from.

\citet{durugkar2017constrained} observed that due to over-generalization in DQN, updates to the value of the current state also have an adverse effect on the values of the next state.
This can lead to unstable learning when the discount factor is high. 
To counteract this effect, they constrained the TD update to be orthogonal to the direction of maximum change of the next state.
However, their approach only worked on toy domains such as Cart-Pole.
Finally, \citet{vanhasselt2016popart} successfully extended DQN to 
process unclipped rewards with an algorithm called PopArt, which adaptively rescales the targets for the value network to have zero mean and unit variance.

\section{Algorithm}
\label{sec:Algorithm}
In this section, we describe our algorithm, which consists of three components: (1)~The transformed Bellman operator; (2)~The temporal consistency (TC) loss; (3)~Combining Ape-X DQN and DQfD.

\subsection{DQN Background}
Let $\langle \mathcal{X}, \mathcal{A}, R, P, \gamma \rangle$ be a finite, discrete-time MDP where $\mathcal{X}$ is the state space, $\mathcal{A}$ the action space, $R$ the reward function which represents the one-step reward distribution $R(x,a)$ of doing action $a$ in state $x$, $\gamma\in [0,1]$ the discount factor and $P$ a stochastic kernel modelling the one-step Markovian dynamics ($P(x'|x,a)$ is the probability of transitioning to state $x'$ by choosing action $a$ in state $x$).
The quality of a policy $\pi$ is determined by the action-value function 
\begin{align*}
Q^{\pi} : \mathcal{X} \times \mathcal{A} \to \mathbb{R}, (x, a) \mapsto \mathbb{E}^{\pi}\left[\sum_{t \geq 0} \gamma^t R(x_t, a_t) \ | \ x_0 = x, a_0 = a\right],
\end{align*}
where $\mathbb{E}^\pi$ is the expectation over the distribution of the admissible trajectories $(x_0, a_0, x_1, a_1, \dots)$ obtained by executing the policy $\pi$ starting from state $x$ and taking action $a$. 
The goal is to find a policy $\pi^*$ that maximizes the state-value $V^\pi(x) := \max_{a \in \mathcal{A}} Q^\pi(x, a)$ for all states $x$, \ie find $\pi^*$ such that V$^{\pi^*}(x) \geq \sup_\pi V^\pi(x)$ for all $x \in \mathcal{X}$.
While there may be several optimal policies, they all share a common optimal action-value function $Q^*$~\cite{puterman1994markov}.
Furthermore, acting greedily with respect to the optimal action-value function $Q^*$ yields an optimal policy.
In addition, $Q^*$ is the unique fixed point of the \emph{Bellman optimality operator} $\mathcal{T}$ defined as
\begin{align*}
(\mathcal{T}Q)(x, a) := \mathbb{E}_{x' \sim P(\cdot | x, a)}\left[R(x, a) + \gamma 
\max_{a' \in \mathcal{A}} Q(x', a')\right], \quad \forall (x, a) \in \mathcal{X} \times \mathcal{A}
\end{align*}
for any $Q : \mathcal{X} \times \mathcal{A} \to \mathbb{R}$.
Because $\mathcal{T}$ is a $\gamma$-contraction, we can learn $Q^*$ using a fixed point iteration.
Starting with an arbitrary function $Q^{(0)}$ and then iterating $Q^{(k)} := \mathcal{T}Q^{(k - 1)}$ for $k\in\mathbb{N}$ generates a sequence of functions that converges to $Q^*$.

DQN~\cite{mnih2015human} is an online-RL algorithm using a deep neural network $f_\theta$ with parameters $\theta$ as a function approximator of the optimal action-value function $Q^*$.
The algorithm starts with a random initialization of the network weights $\theta^{(0)}$ and then iterates
\begin{align}
\theta^{(k)} := \argmin_\theta \mathbb{E}_{x, a}\left[\mathcal{L}(f_\theta(x, a) - (\mathcal{T}f_{\theta^{(k - 1)}})(x, a))\right], \label{eq:dqn}
\end{align}
where the expectation is taken with respect to a random sample of states and actions and $\mathcal{L}$ is the Huber loss~\cite{huber1964robust} defined as
\begin{align*}
\mathcal{L} : \mathbb{R} \to \mathbb{R}, x \mapsto \left\{ \begin{array}{ll} \frac{1}{2}x^2 & \text{if $|x| \leq 1$} \\ |x| - \frac{1}{2} & \text{otherwise} \end{array} \right.
\end{align*}
In practice, the minimization problem in (\ref{eq:dqn}) is only approximately solved by performing a finite and fixed number of stochastic gradient descent (SGD) steps\footnote{\citet{mnih2015human} refer to the number of SGD iterations as \emph{target update period}.} and all expectations are approximated by sample averages.

\subsection{Transformed Bellman Operator}
\label{sec:transform}
\citet{mnih2015human} have empirically observed that the errors induced by the limited network capacity, the 
approximate finite-time solution to (\ref{eq:dqn}), and the stochasticity of the optimization problem can cause the algorithm to diverge if the variance of the optimization target $\mathcal{T}f_{\theta^{(k - 1)}}$ is too high.
In  order to reduce the variance, they clip the reward distribution to the interval $[-1, 1]$. While this achieves the desired goal of stabilizing the algorithm, it significantly changes the set of optimal policies.
For example, consider a simplified version of \textsc{Bowling} where an episode only consists of a single throw. 
If the original reward is the number of hit pins and the rewards were clipped, any policy that hits at least a single pin would be optimal under the clipped reward function.
Instead of reducing the magnitude of the rewards, we propose to focus on the action-value function instead. 
We use a function $h : \mathbb{R} \to \mathbb{R}$ that reduces the scale of the action-value function. Our new operator $\mathcal{T}_h$ is defined as
\begin{align*}
(\mathcal{T}_hQ)(x, a) := \mathbb{E}_{x' \sim P(\cdot | x, a)}\left[h\left( R(x, a) + \gamma \max_{a' \in \mathcal{A}} h^{-1}(Q(x', a')) \right)\right], \quad \forall (x, a) \in \mathcal{X} \times \mathcal{A}.
\end{align*}
\begin{prop}
\label{prop:trivial}
Let $Q^*$ be the fixed point of $\mathcal{T}$ and $Q : \mathcal{X} \times \mathcal{A} \to \mathbb{R}$, then
\begin{enumerate}[(i)]
\item If $h(z) = \alpha z$ for $\alpha > 0$, then $\mathcal{T}_h^k Q \xrightarrow{k \rightarrow \infty} h \circ Q^* = \alpha Q^*$.
\item If $h$ is strictly monotonically increasing and the MDP is deterministic (\ie $P(\cdot | x, a)$ and $R(x, a)$ are point measures for all $x, a \in \mathcal{X} \times \mathcal{A}$), then $\mathcal{T}_h^k Q \xrightarrow{k \rightarrow \infty} h \circ Q^*$.
\end{enumerate}
\end{prop}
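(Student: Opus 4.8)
The plan is to observe that, in both regimes, the transformed operator is a \emph{conjugate} of the ordinary Bellman optimality operator through $h$, namely $\mathcal{T}_h = h \circ \mathcal{T} \circ h^{-1}$, and then to push the already-established convergence of the fixed-point iteration for $\mathcal{T}$ through the conjugating map. Throughout I would take $h$ to be a continuous, strictly increasing bijection of $\mathbb{R}$: this is exactly what makes $h^{-1}$, and hence $\mathcal{T}_h$ and all of its iterates, well-defined on the whole space of functions $\mathcal{X} \times \mathcal{A} \to \mathbb{R}$, and it holds in case~(i) with $h(z) = \alpha z$.

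\textbf{Step 1: the conjugacy identity.} Writing both operators out, $(\mathcal{T}_h Q)(x,a) = \mathbb{E}_{x' \sim P(\cdot \mid x,a)}\big[\, h\big( R(x,a) + \gamma \max_{a'} h^{-1}(Q(x',a')) \big) \,\big]$, whereas $(h \circ \mathcal{T} \circ h^{-1})(Q)(x,a) = h\big( \mathbb{E}_{x' \sim P(\cdot \mid x,a)}\big[\, R(x,a) + \gamma \max_{a'} h^{-1}(Q(x',a')) \,\big] \big)$; the only difference is whether $h$ sits inside or outside the expectation (note that $\max_{a'} h^{-1}(Q(x',a')) = \max_{a'}(h^{-1}\circ Q)(x',a')$ is literally the term appearing in $\mathcal{T}(h^{-1}\circ Q)$, so no extra manipulation of the $\max$ is needed). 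In case~(i), $h$ is linear and hence commutes with the expectation, so the two expressions agree. In case~(ii), the MDP is deterministic, so $P(\cdot\mid x,a)$ is a point mass and $R(x,a)$ a constant; the expectation is degenerate and again the two agree, for any strictly increasing $h$. Thus $\mathcal{T}_h = h \circ \mathcal{T} \circ h^{-1}$ in both settings.

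\textbf{Step 2: iterate and take limits.} From the identity, an easy induction (the inner $h^{-1}\circ h$ pairs telescope) gives $\mathcal{T}_h^k = h \circ \mathcal{T}^k \circ h^{-1}$ for all $k$. Fix any $Q$; then $h^{-1}\circ Q$ is an ordinary real-valued function on the finite set $\mathcal{X}\times\mathcal{A}$, so by the $\gamma$-contraction property of $\mathcal{T}$ the value iteration $\mathcal{T}^k(h^{-1}\circ Q)$ converges (uniformly) to $Q^*$. Applying the continuous map $h$ and using $\mathcal{T}_h^k Q = h\big(\mathcal{T}^k(h^{-1}\circ Q)\big)$, we conclude $\mathcal{T}_h^k Q \to h \circ Q^*$, which equals $\alpha Q^*$ in case~(i) and $h\circ Q^*$ in case~(ii).

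The only real care needed is around well-definedness and continuity, and I expect that to be the sole sticking point: $h^{-1}$ must be applicable to every value taken by the iterates (hence the bijectivity assumption on $h$), and continuity of $h$ is precisely what licenses interchanging $h$ with the $k\to\infty$ limit. Once the conjugacy is in place there is nothing further to do — in particular, no separate contraction argument for $\mathcal{T}_h$ itself is required here; that is only needed for the general stochastic case discussed afterwards.
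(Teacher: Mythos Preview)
Your argument is correct and, in fact, more complete than the paper's. Both you and the paper hinge on the conjugacy $\mathcal{T}_h = h \circ \mathcal{T} \circ h^{-1}$ for part~(ii); the paper writes this out as the chain $h\circ Q^* = h\circ \mathcal{T} Q^* = h\circ \mathcal{T}(h^{-1}\circ h\circ Q^*) = \mathcal{T}_h(h\circ Q^*)$ and stops there, i.e.\ it only verifies that $h\circ Q^*$ is \emph{a} fixed point of $\mathcal{T}_h$ without arguing convergence of the iterates from an arbitrary $Q$. Your telescoping step $\mathcal{T}_h^k = h\circ \mathcal{T}^k \circ h^{-1}$ together with continuity of $h$ actually closes that gap and delivers the stated limit. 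For part~(i) the paper takes a slightly different shortcut---it observes that $h(z)=\alpha z$ makes $\mathcal{T}_h$ the ordinary Bellman operator for the reward-scaled MDP, whose fixed point is $\alpha Q^*$---whereas you fold~(i) into the same conjugacy framework by noting that linear $h$ commutes with the expectation. Your unified treatment is cleaner; the paper's reward-scaling remark for~(i) is marginally more concrete but covers only that case. Your explicit caveat that $h$ should be a continuous bijection (so that $h^{-1}$ is defined on all iterate values and the limit passes through $h$) is a point the paper leaves implicit.
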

\begin{proof}
(i) is equivalent to linearly scaling the reward $R$ by a constant $\alpha > 0$, which implies the proposition.
For (ii) let $Q^*$ be the fixed point of $\mathcal{T}$ and note that $h \circ Q^*  = h \circ \mathcal{T}Q^* = h \circ \mathcal{T}(h^{-1} \circ h \circ Q^*) = \mathcal{T}_h(h \circ Q^*)$ where the last equality only holds if the MDP is deterministic.
\end{proof}

Proposition~\ref{prop:trivial} shows that in the basic cases when either $h$ is linear or the MDP is deterministic, $\mathcal{T}_h$ has the unique fixed point $h \circ Q^*$.
Hence, if $h$ is an invertible contraction and we use $\mathcal{T}_h$ instead of $\mathcal{T}$ in the DQN algorithm, the variance of our optimization target decreases while still learning an optimal policy.
In our algorithm, we use $h : z \mapsto \text{sign}(z) (\sqrt{|z| + 1} - 1) + \varepsilon z$ with $\varepsilon = 10^{-2}$ where the additive regularization term $\varepsilon z$ ensures that $h^{-1}$ is Lipschitz continuous (see 
Proposition~\ref{prop:stochastic}).
We chose this function because it has the desired effect of reducing the scale of the targets while being Liptschitz continuous and admitting a closed form inverse.

In practice, DQN minimizes the problem in (\ref{eq:dqn}) by sampling transitions of the form $t = (x, a, r', x')$ from a replay buffer where $x \in \mathcal{X}, a \sim \pi(\cdot | x), r' \sim R(x, a)$, and $x' \sim P(x, a)$.
Let $t_1,...,t_N$ be $N$ transitions from the buffer with normalized priorities $p_1,...,p_N$, then for $k \in \mathbb{N}$ the loss function in (\ref{eq:dqn}) using the operator $\mathcal{T}_h$ is approximated as
\begin{align*}
\mathbb{E}_{x, a}\left[\mathcal{L}(f_\theta(x, a) - (\mathcal{T}_hf_{\theta^{(k - 1)}})(x, a))\right] &\approx \sum_{i = 1}^N p_i \mathcal{L}\left(f_\theta(x_i, a_i) - h(r'_i + \gamma h^{-1}(f_{\theta^{(k - 1)}}(x_i', a_i')))\right)\\
&=: L_{\text{TD}}(\theta; (t_i)_{i = 1}^N, (p_i)_{i = 1}^N, \theta^{(k - 1)})
\end{align*}
where $a'_i := \argmax_{a \in \mathcal{A}} f_{\theta^{(k - 1)}}(x'_i, a)$ for DQN and $a'_i := \argmax_{a \in \mathcal{A}} f_\theta(x'_i, a)$ for Double DQN~\cite{van2016deep}.

\subsection{Temporal consistency (TC) loss}
\label{sec:tcloss}
The stability of DQN, which minimizes the TD-loss $L_\text{TD}$, is primarily determined by the target $\mathcal{T}_hf_{\theta^{(k - 1)}}$.
While the transformed Bellman operator provides an atemporal reduction of the target's scale and variance, instability can still occur as the discount factor $\gamma$ approaches 1.
Increasing the discount factor decreases the temporal difference in value between non-rewarding states.
In particular, unwanted generalization of the neural network $f_\theta$ to the next state $x'$ (due to the similarity of temporally adjacent target values) can result in catastrophic TD backups.
We resolve the problem by adding an auxiliary \emph{temporal consistency (TC) loss} of the form
\begin{align*}
L_{\text{TC}}(\theta; (t_i)_{i = 1}^N, (p_i)_{i = 1}^N, \theta^{(k - 1)}) := \sum_{i = 1}^N p_i \mathcal{L}\left(f_\theta(x'_i, a_i') - f_{\theta^{(k - 1)}}(x_i', a_i') \right)
\end{align*}
where $k \in \mathbb{N}$ is the current iteration.
The TC-loss penalizes weight updates that change the next action-value estimate $f_\theta(x', a')$.
This makes sure that the updated estimates adhere to the operator $\mathcal{T}_h$ and thus are consistent over time.

\subsection{Ape-X DQfD}
\label{sec:apexdqfd}
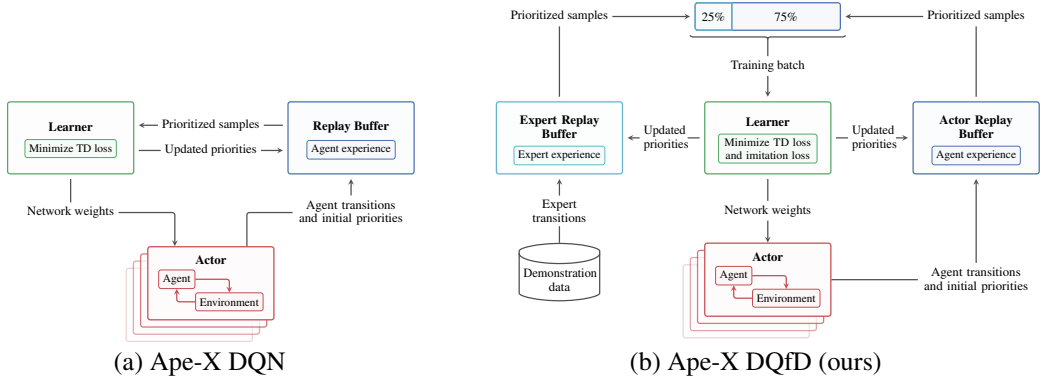
\begin{figure}[!t]
\resizebox{\linewidth}{!}{
\begin{tikzpicture}

\def \vdist {0.1}
\def \hdistmajor {2.0}

\node(learnertotal) {
	\begin{tikzpicture}
		\node (learner) {\textbf{Learner}};
		\node (learnersubtitle) [dmminor, color=dmgreen, fill=dmgreen!5, text=black, below=\vdist of learner, text width=2.5cm, align=center] {Minimize TD loss \\ and imitation loss};
		\begin{scope}[on background layer]
			\node (learnerbox) [dmmajor, color=dmgreen, fill=dmgreen!2, fit={(learner) (learnersubtitle)}] {};
		\end{scope}
	\end{tikzpicture}
};

\node(replaybuffertotal) [right=\hdistmajor of learnertotal] {
	\begin{tikzpicture}
		\node[text width=3cm, align=center] (replaybuffer) {\textbf{Actor Replay \\ Buffer}};
		\node (replaybuffersubtitle) [dmminor, color=dmblue, fill=dmblue!5, text=black, below=\vdist of replaybuffer] {Agent experience};
		\begin{scope}[on background layer]
			\node (replaybufferbox) [dmmajor, color=dmblue, fill=dmblue!2, fit={(replaybuffer) (replaybuffersubtitle)}] {};
		\end{scope}
	\end{tikzpicture}
};

\node(expreplaybuffertotal) [left=\hdistmajor of learnertotal] {
	\begin{tikzpicture}
		\node[text width=3cm, align=center] (expreplaybuffer) {\textbf{Expert Replay \\ Buffer}};
		\node (expreplaybuffersubtitle) [dmminor, color=dmcyan, fill=dmcyan!5, text=black, below=\vdist of expreplaybuffer] {Expert experience};
		\begin{scope}[on background layer]
			\node (expreplaybufferbox) [dmmajor, color=dmcyan, fill=dmcyan!2, fit={(expreplaybuffer) (expreplaybuffersubtitle)}] {};
		\end{scope}
	\end{tikzpicture}
};

\node[below=0.95 * \hdistmajor of learnertotal, anchor=north] (actor) {\textbf{Actor}};
\node(interaction) [below=-0.1cm of actor] {
	\begin{tikzpicture}
		\node (agent) [dmminor, line width=0.5pt, color=dmred, fill=dmred!5, text=black, ] {Agent};
		\node (env) [dmminor, line width=0.5pt, color=dmred, fill=dmred!5, text=black, below right=\vdist and 0 of agent, anchor=north west] {Environment};
		\path[dmpath, dmarrow, dmred] (agent.east) -| (env.north);
		\path[dmpath, dmarrow, dmred] (env.west) -| (agent.south);
	\end{tikzpicture}
};
\begin{scope}[on background layer]
	\node [dmmajor, color=dmred, fill=dmred!2, fit={(actor) (interaction)}, xshift=-0.6cm, yshift=-0.6cm, opacity=0.25] {};
	\node [dmmajor, color=dmred, fill=dmred!2, fit={(actor) (interaction)}, xshift=-0.4cm, yshift=-0.4cm, opacity=0.5] {};
	\node [dmmajor, color=dmred, fill=dmred!2, fit={(actor) (interaction)}, xshift=-0.2cm, yshift=-0.2cm, opacity=0.75] {};
	\node (actorbox) [dmmajor, color=dmred, fill=dmred!2, fit={(actor) (interaction)}] {};
\end{scope}

\node(batch) [above=1.3 * \hdistmajor of learner] {
	\begin{tikzpicture}
		\node[minimum width=1cm, dmlabel, minimum height=0.75cm] (p25) {25\%};
		\node[minimum width=3cm, dmlabel, minimum height=0.75cm, right=0 of p25] (p75) {75\%};
		
		\begin{scope}[on background layer]
			\fill[draw, color=dmcyan, fill=dmcyan!5] (p25.south east) {[rounded corners=2pt] -- (p25.south west) -- (p25.north west)} -- (p25.north east) -- (p25.south east);
			\fill[dmblue!5] (p75.south west) {[rounded corners=2pt] -- (p75.south east) -- (p75.north east)} -- (p75.north west) -- cycle;
			\draw[dmrect, dmblue] (p25.south west)  -- (p75.south east) -- (p75.north east) -- (p25.north west) -- cycle;
			\draw[dmpath, dmblue] (p25.south east) -- (p25.north east);
			\draw[dmpath, rounded corners=0, color=dmcyan] (p25.south east) {[rounded corners=2pt] -- (p25.south west) -- (p25.north west)} -- (p25.north east) -- cycle;
		\end{scope}
		
	\end{tikzpicture}
};

\draw [
	line width=1pt,
	color=dmlightgray,
    decoration={
        brace,
        mirror,
    },
    decorate
] (batch.south west) -- (batch.south east);
 
\node[draw, cylinder, shape border rotate=90, aspect=0.25,text width=2cm, align=center, dmlabel, dmpath, text=black] (db) at (expreplaybuffertotal.south |-, |-actorbox.west) {Demonstration \\ data};

\path[dmpath, dmarrow] (actorbox.east) -| (replaybuffertotal.south) node[midway, dmlabel, text=black, fill=white, text width=3cm, align=center]{Agent transitions \\and initial priorities};
\path[dmpath, dmarrow] (learnertotal.south) -- (actorbox.north) node[midway, dmlabel, text=black, fill=white]{Network weights};
\path[dmpath, dmarrow] (db.north) -- (expreplaybuffertotal.south)  node[midway, dmlabel, text=black, fill=white, text width=2cm, align=center]{Expert\\transitions};
\path[dmpath, dmarrow] (expreplaybuffertotal.north) |- (batch.west) node[midway, dmlabel, text=black, fill=white]{Prioritized samples};
\path[dmpath, dmarrow] (replaybuffertotal.north) |- (batch.east) node[midway, dmlabel, text=black, fill=white]{Prioritized samples};
\path[dmpath, dmarrow] (learnertotal.east) -- (replaybuffertotal.west) node[midway, dmlabel, text=black, fill=white, text width=1.1cm, align=center]{Updated\\priorities};
\path[dmpath, dmarrow] (learnertotal.west) -- (expreplaybuffertotal.east) node[midway, dmlabel, text=black, fill=white, text width=1.1cm, align=center]{Updated\\priorities};
\path[dmpath, dmarrow, shorten <=0.5mm] (batch.south) -- (learnertotal.north) node[midway, dmlabel, text=black, fill=white]{Training batch};


\node(replaybuffertotal1) [left=\hdistmajor of expreplaybuffertotal] {
	\begin{tikzpicture}
		\node (replaybuffer1) {\textbf{Replay Buffer}};
		\node (replaybuffersubtitle1) [dmminor, color=dmblue, fill=dmblue!5, text=black, below=\vdist of replaybuffer1] {Agent experience};
		\begin{scope}[on background layer]
			\node (replaybufferbox1) [dmmajor, color=dmblue, fill=dmblue!2, fit={(replaybuffer1) (replaybuffersubtitle1)}] {};
		\end{scope}
	\end{tikzpicture}
};

\node(learnertotal1) [left=2 * \hdistmajor of replaybuffertotal1] {
	\begin{tikzpicture}
		\node (learner1) {\textbf{Learner}};
		\node (learnersubtitle1) [dmminor, color=dmgreen, fill=dmgreen!5, text=black, below=\vdist of learner1] {Minimize TD loss};
		\begin{scope}[on background layer]
			\node (learnerbox1) [dmmajor, color=dmgreen, fill=dmgreen!2, fit={(learner1) (learnersubtitle1)}] {};
		\end{scope}
	\end{tikzpicture}
};

\node (actor1) [below right=\hdistmajor and \hdistmajor of learnertotal1, anchor=north] {\textbf{Actor}};
\node(interaction1) [below=-0.1cm of actor1] {
	\begin{tikzpicture}
		\node (agent1) [dmminor, line width=0.5pt, color=dmred, fill=dmred!5, text=black, ] {Agent};
		\node (env1) [dmminor, line width=0.5pt, color=dmred, fill=dmred!5, text=black, below right=\vdist and 0 of agent1, anchor=north west] {Environment};
		\path[dmpath, dmarrow, dmred] (agent1.east) -| (env1.north);
		\path[dmpath, dmarrow, dmred] (env1.west) -| (agent1.south);
	\end{tikzpicture}
};
\begin{scope}[on background layer]
	\node [dmmajor, color=dmred, fill=dmred!2, fit={(actor1) (interaction1)}, xshift=-0.6cm, yshift=-0.6cm, opacity=0.25] {};
	\node [dmmajor, color=dmred, fill=dmred!2, fit={(actor1) (interaction1)}, xshift=-0.4cm, yshift=-0.4cm, opacity=0.5] {};
	\node [dmmajor, color=dmred, fill=dmred!2, fit={(actor1) (interaction1)}, xshift=-0.2cm, yshift=-0.2cm, opacity=0.75] {};
	\node (actorbox1) [dmmajor, color=dmred, fill=dmred!2, fit={(actor1) (interaction1)}] {};
\end{scope}

\node (middle1) at ($(actorbox1.north)!0.5!(replaybuffertotal1.south)$) {};

\node[dmlabel, text width=3cm, align=center] (actorreplaylabel1) at (replaybuffertotal1.south |-, |- middle1) {Agent transitions \\and initial priorities};
\node[dmlabel] (learneractorlabel1) at (learnertotal1.south |-, |- middle1) {Network weights};

\path[dmarrow, dmpath] ([yshift=-0.35cm]learnertotal1.east) -- ([yshift=-0.35cm]replaybuffertotal1.west) node[midway, fill=white, dmlabel, text=black]{Updated priorities};
\path[dmpath, dmarrow] ([yshift=0.35cm]replaybuffertotal1.west) -- ([yshift=0.35cm]learnertotal1.east) node[midway, fill=white, dmlabel, text=black]{Prioritized samples};

\path[dmpath] ([xshift=1cm]actorbox1.north) |- (actorreplaylabel1.west);
\path[dmpath, dmarrow] (actorreplaylabel1.north) -- (replaybuffertotal1.south);

\path[dmpath, dmarrow] (learneractorlabel1.east) -| ([xshift=-1cm]actorbox1.north);
\path[dmpath] (learneractorlabel1.north) -- (learnertotal1.south);

\end{tikzpicture}
}
\begin{tabular}[]{P{4.9cm}P{1.7cm}P{4.8cm}}
(a) Ape-X DQN &  & (b) Ape-X DQfD (ours)
\end{tabular}
\caption{The figure compares our architecture (b) to the one proposed by \citet{horgan2018distributed} (a).}
\label{fig:arch}

\end{figure}

In this section, we describe how we combine the transformed Bellman operator and the TC loss with the DQfD algorithm~\cite{hester2017learning} and distributed prioritized experience replay~\cite{horgan2018distributed}.
The resulting algorithm, which we call \emph{Ape-X DQfD} following \citet{horgan2018distributed}, is a distributed DQN algorithm with expert demonstrations that is robust to the reward distribution and can learn at discount factors an order of magnitude higher than what was possible before (\ie $\gamma = 0.999$ instead of $\gamma = 0.99$).
Our algorithm consists of three components: (1) replay buffers; (2) actor processes; and (3) a learner process. 
Fig.~\ref{fig:arch} shows how our architecture compares to the one used by \citet{horgan2018distributed}.

\textbf{Replay buffers.} Following \citet{hester2017learning}, we maintain two replay buffers: an \emph{actor replay buffer} and an \emph{expert replay buffer}.
Both buffers store 1-step and 10-step transitions and are prioritized~\cite{schaul2015prioritized}. 
The transitions in the actor replay buffer come from actor processes that interact with the MDP.
In order to limit the memory consumption of the actor replay buffer, we regularly remove transitions in a FIFO-manner. 
The expert replay buffer is filled once offline before training commences.

\textbf{Actor processes.} \citet{horgan2018distributed} showed that we can significantly improve the performance of DQN with prioritized replay buffers by having many actor processes.
We follow their approach and use $m = 128$ actor processes.
Each actor $i$ follows an $\varepsilon_i$-greedy policy based on the current estimate of the action-value function.
The noise levels $\varepsilon_i$ are chosen as $\varepsilon_i := 0.1^{\alpha_i + 3(1 - \alpha_i)}$ where $\alpha_i := \frac{i - 1}{m - 1}$. 
Notably, this exploration is closer to the one used by \citet{hester2017learning} and is much lower (\ie less random exploration) than the schedule used by \citet{horgan2018distributed}.

\textbf{Learner process.} The learner process samples experiences from the two replay buffers and minimizes a loss in order to approximate the optimal action-value function.
Following \citet{hester2017learning}, we combine the TD-loss $L_{\text{TD}}$ with a supervised imitation loss.
Let $t_1,...,t_N$ be transitions of the form $t_i = (x_i, a_i, r'_i, x'_i, e_i)$ with normalized priorities $p_1,...,p_N$ where $e_i$ is 1 if the transition is part of the best (\ie highest episode return) expert episode and 0 otherwise.
The imitation loss is a max-margin loss of the form
\begin{align}
L_{\text{IM}}(\theta; (t_i)_{i = 1}^N, (p_i)_{i = 1}^N, \theta^{(k - 1)}) := \sum_{i = 1}^{N} p_i e_i \left(\max_{a \in \mathcal{A}} [f_\theta(x_i, a) + \lambda \delta_{a \neq a_i}] - f_\theta(x_i, a_i) \right) \label{eq:maxmargin}
\end{align}
where $\lambda \in \mathbb{R}$ is the \emph{margin} and $\delta_{a \neq a_i}$ is 1 if $a \neq a_i$ and 0 otherwise.
Combining the imitation loss with the TD loss and the TC loss yields the total loss formulation
\begin{align*}
L(\theta; (t_i)_{i = 1}^N, (p_i)_{i = 1}^N, \theta^{(k - 1)}) := (L_{\text{TD}} + L_{\text{TC}} + L_{\text{IM}})(\theta; (t_i)_{i = 1}^N, (p_i)_{i = 1}^N, \theta^{(k - 1)}).
\end{align*}
Algo.~\ref{alg:learner}, provided in the appendix,
shows the entire learner procedure.
Note that while we only apply the imitation loss $L_\text{IM}$ on the best expert trajectory, we still use all expert trajectories for the other two losses.

Our learning algorithm differs from the one used by \citet{hester2017learning} in three important ways. 
First, we do not have a pre-training phase where we minimize $L$ only using expert transitions.
We learn with a mix of actor and expert transitions from the beginning. 
Second, we maintain a fixed ratio of actor and expert transitions. 
For each SGD step, our training batch consists of 75\% agent transitions and 25\% expert transitions.
The ratio is constant throughout the entire learning process.
Finally, we only apply the imitation loss $L_\text{IM}$ to the best expert episode instead of all episodes.

\begin{table*}[!t]
\tiny
\begin{tabularx}{\linewidth}{r?C-C-C?C-C?C-C-C}
\toprule
Algorithm& \RotText{Rainbow} & \RotText{DQfD } & \RotText{Ape-X DQN} & \RotText{Ape-X DQfD} & \RotText{\parbox{1.3cm}{Ape-X DQfD \\ (deeper)}} & \RotText{Random} & \RotText{Avg. Human} & \RotText{\parbox{1.3cm}{Best Expert\\Trajectory}} \\
\midrule
Rainbow DQN  & -- & 31 / 42 & 9 / 42 & 10 / 42 & 7 / 42 & 41 / 42 & 32 / 42 & 24 / 42\\
DQfD  & 11 / 42 & -- & 7 / 42 & 11 / 42 & 2 / 42 & 40 / 42 & 25 / 42 & 13 / 42 \\
Ape-X DQN  & 34 / 42 & 35 / 42 & -- & 28 / 42 & \textbf{15 / 42} & 40 / 42 & 31 / 42 & 31 / 42 \\
\midrule
Ape-X DQfD  & 32 / 42 & 39 / 42 & 15 / 42 & -- & 9 / 42 & 40 / 42 & 39 / 42 & 32 / 42\\
Ape-X DQfD (deeper) & \textbf{36 / 42} & \textbf{40 / 42} & \textbf{28 / 42} & \textbf{33 / 42} & -- & \textbf{42 / 42} & \textbf{40 / 42} & \textbf{34 / 42} \\
\bottomrule
\end{tabularx}
\caption{The table shows on which fraction of the tested games one approach performs at least as well as the other.
The scores used for the comparison are using the no-op starts regime. 
As described in Sec.~\ref{sec:Baseline}, we compare the agents' scores to the scores obtained by an average human player and an expert player. 
Ape-X DQfD (deeper) out-performs the average human on 40 of 42 games.}
\label{tbl:games_comp}

\vspace{5mm}

\begin{tabularx}{\linewidth}{r?C-C-C-C?C-C-C-C}
\toprule
& \multicolumn{4}{c?}{No-op starts}& \multicolumn{4}{c}{Human starts} \\
&\multicolumn{2}{c-}{Mean} & \multicolumn{2}{c?}{Median} & \multicolumn{2}{c-}{Mean} & \multicolumn{2}{c}{Median} \\
Algorithm& 42 Games & 57 Games& 42 Games & 57 Games& 42 Games & 57 Games& 42 Games & 57 Games\\
\midrule
Rainbow DQN & 1022\% & 874\% & 231\% & 231\% & 897\% & 776\% & 159\% & 153\%\\ 
DQfD   & 364\% & -- & 113\% & -- & -- & -- & --& -- \\ 
Ape-X DQN   & 1770\% & 1695\% & 421\% & 434\% & 1651\% & 1591\% & 354\% & 358\%\\ 
\midrule
Ape-X DQfD   & 1536\% & -- & 339\% & -- & 1461\% & -- & 302\% & --\\ 
Ape-X DQfD (deeper)  & \textbf{2346\%} & -- & \textbf{702\%} & -- & \textbf{2028\%} & -- & \textbf{547\%} & --\\ 
\bottomrule
\end{tabularx}
\caption{The table shows the human-normalized performance of our algorithm and the baselines. 
For each game, we normalize the score as $\frac{\text{alg. score} - \text{random score}}{\text{avg. human score} - \text{random score}} \times 100$ and then aggregate over all games (mean or median).
Because we only have demonstrations on 42 out of the 57 games, we report the performances on 42 games and also 57 games for baselines not using demonstrations.}
\label{tbl:scores_comp}
\vspace{-3mm}
\end{table*}

\section{Experimental evaluation}
\label{sec:experiments}

We evaluate our approach on the same subset of 42 games from the Arcade Learning Environment (ALE)~\cite{bellemare2015arcade} used by \citet{hester2017learning}.
We report the performance using the \emph{no-op starts} and the \emph{human starts} test regimes~\cite{mnih2015human}. 
The full evaluation procedure is detailed in Sec.~\ref{sec:setup}.

\subsection{Benchmark results}
\label{sec:Baseline}
We compare our approach to Ape-X DQN~\cite{horgan2018distributed}, on which our actor-learner architecture is based, DQfD~\cite{hester2017learning}, which introduced the expert replay buffer and the imitation loss, and Rainbow DQN~\cite{hessel2017rainbow}, which combines all major DQN extensions from the literature into a single algorithm. 
Note that the scores reported in \cite{horgan2018distributed} were obtained by running 360 actors. 
Due to resource constraints, we limit the number of actors to 128 for all Ape-X DQfD experiments.
Besides comparing our performance to other RL agents, we are also interested in comparing our scores to a human player.
Because our demonstrations were gathered from an expert player, the expert scores are mostly better than the level of human performance reported in the literature~\cite{mnih2015human, wang2016dueling}. 
Hence, we treat the historical human scores as the performance of an average human and the scores of our expert as expert performance. 

We first analyse the performance of the standard dueling DQN architecture~\cite{wang2016dueling} that is also used by the baselines.
We report the scores as \emph{Ape-X DQfD} in Tables~\ref{tbl:games_comp}~and~\ref{tbl:scores_comp}.
We designed the algorithm to achieve higher consistency over a broad range of games and the scores shown in Table~\ref{tbl:games_comp} reflect that goal.
Whereas previous approaches outperformed an average human on at most 32 out of 42 games, Ape-X DQfD with the standard dueling architecture achieves a new state-of-the-art result of 39 out of 42 games.
This means we significantly improve the performance on the tails of the distribution of scores over the games.
When looking at this performance in the context of the median human-normalized scores reported in Table~\ref{tbl:scores_comp}, we see that we significantly increase the set of games where we learn good policies at the expense of achieving lower peak scores on some games.

One of the significant changes in our experimental setup is moving from a discount factor of $\gamma = 0.99$ to $\gamma = 0.999$.
\citet{jiang2015dependence} argue that this increases the complexity of the learning problem and, thus, requires a bigger hypothesis space.
Hence, in addition to the standard architecture, we also evaluated a slightly wider (\ie double the number of convolutional kernels) and deeper (one extra fully connected layer) network architecture (see 
Fig.~\ref{fig:networks}).
With the deeper architecture, our algorithm outperforms an average human on 40 out of 42 games.
Furthermore, it is the first deep RL algorithm to learn non-trivial policies on all games including sparse reward games such as \textsc{Montezuma's Revenge}, \textsc{Private Eye}, and \textsc{Pitfall!}.
For example, we achieve 3997 points in \textsc{Pitfall!}, which is below the 6464 points of an average human but far above any baseline.
Finally, with a median human-normalized score of 702\% and exceeding every baseline on at least $\frac{2}{3}$ of the games, we demonstrate strong peak performance and consistency over the entire benchmark.

\subsection{Imitation vs. inspiration}

\begin{figure}[t]
\hspace{-4mm}
\includegraphics[width=1.04\linewidth]{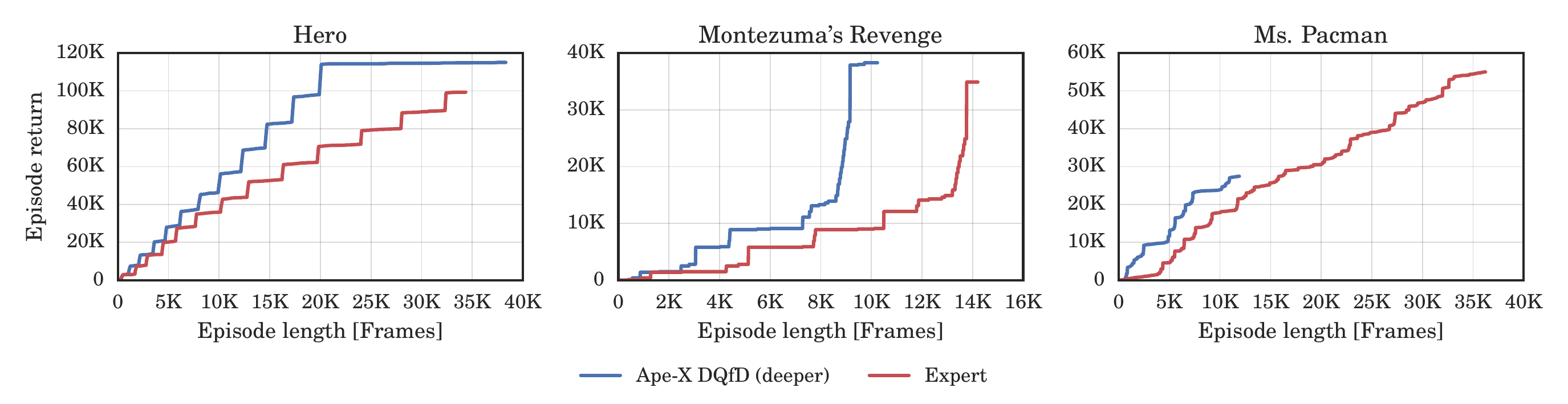}
\vspace{-6mm}
\caption{The figure shows the cumulative undiscounted episode return over time and compares the best expert episode to the best Ape-X DQfD episode on three games.
On \textsc{Hero}, the algorithm exceeds the expert's performance, on \textsc{Montezuma's Revenge}, it matches the expert's score but reaches it faster, and on \textsc{Ms. Pacman}, the expert is still superior.}
\label{fig:imitation}

\vspace{1mm}

\hspace{-4mm}
\includegraphics[width=1.04\linewidth]{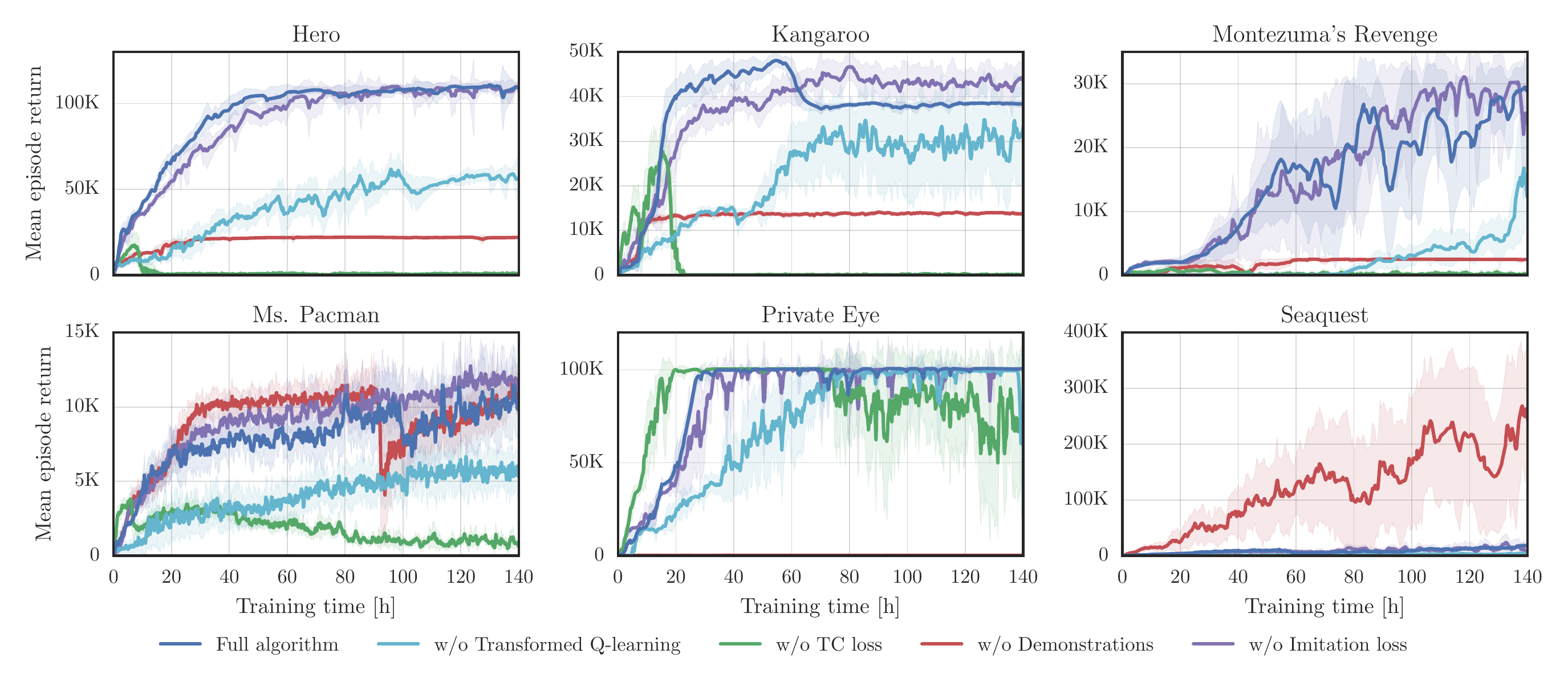}
\vspace{-6mm}
\caption{Results of our ablation study using the standard network architecture.
The experiment without expert data (\linereference{ablation_dqn}) was performed with the higher exploration schedule used in~\cite{horgan2018distributed}.}
\label{fig:ablation}
\vspace{-2mm}
\end{figure}

Although we use demonstration data, the goal of RLED algorithms is still to learn an optimal policy that maximizes the expected $\gamma$-discounted return.
While Table~\ref{tbl:games_comp} shows that we exceed the best expert episode on 34 games using the deeper architecture, it is hard to grasp the qualitative differences between the expert policies and our algorithm's policies. 
In order to qualitatively compare the agent and the expert, we provide videos
on YouTube (see Sec.~\ref{sec:video})
and we plot the cumulative episode return of the best expert and agent episodes in Fig.~\ref{fig:imitation}.
We see that our algorithm (\linereference{ablation_dqfd}) finds more time-efficient policies than the expert (\linereference{ablation_dqn}) in all cases. 
This is a strong indicator that our algorithm does not do pure imitation but improves upon the demonstrated policies. 

\subsection{Ablation study}

We evaluate the performance contributions of the three key ingredients of Ape-X DQfD (transformed Bellman operator, the TC-loss, and demonstration data) by performing an ablation study on a subset of 6 games.
We chose sparse-reward games (\textsc{Montezuma's Revenge}, \textsc{Private Eye}), dense-reward games (\textsc{Ms. Pacman}, \textsc{Seaquest}), and games where DQfD performs well (\textsc{Hero}, \textsc{Kangaroo}) (see Fig.~\ref{fig:ablation}).

\textbf{Transformed Bellman operator (\linereference{ablation_dqfd_transform}):}
When using the standard Bellman operator $\mathcal{T}$ instead of the transformed one, Ape-X DQfD is stable but the performance is significantly worse.

\textbf{TC loss (\linereference{ablation_dqfd_tc}):} In our setup, the TC loss is paramount to learning stably.
We see that without the TC loss the algorithm learns faster at the beginning of the training process.
However, at some point during training, the performance collapses and often the process dies with floating point exceptions.

\textbf{Expert demonstrations (\linereference{ablation_dqn} and \linereference{ablation_dqfd_expert_loss}):} 
Unsurprisingly, removing demonstrations entirely (\linereference{ablation_dqn}) severely degrades the algorithm's performance on sparse reward games. 
However, in games that an $\varepsilon$-greedy policy can efficiently explore, such as \textsc{Seaquest}, the performance is on par or worse.
Hence, the bias induced by the expert data is beneficial in some games and harmful in others.
Just removing the imitation loss $L_\text{IM}$ (\linereference{ablation_dqfd_expert_loss}) does not have a significant effect on the algorithm's performance.
This stands in contrast to the original DQfD algorithm and is most likely because we only apply the loss on a single expert trajectory.

\vspace{-1mm}
\subsection{Comparison to related work}
\vspace{-1mm}

\begin{figure}[t]
\hspace{-4mm}
\vspace{-2mm}
\includegraphics[width=1.04\linewidth]{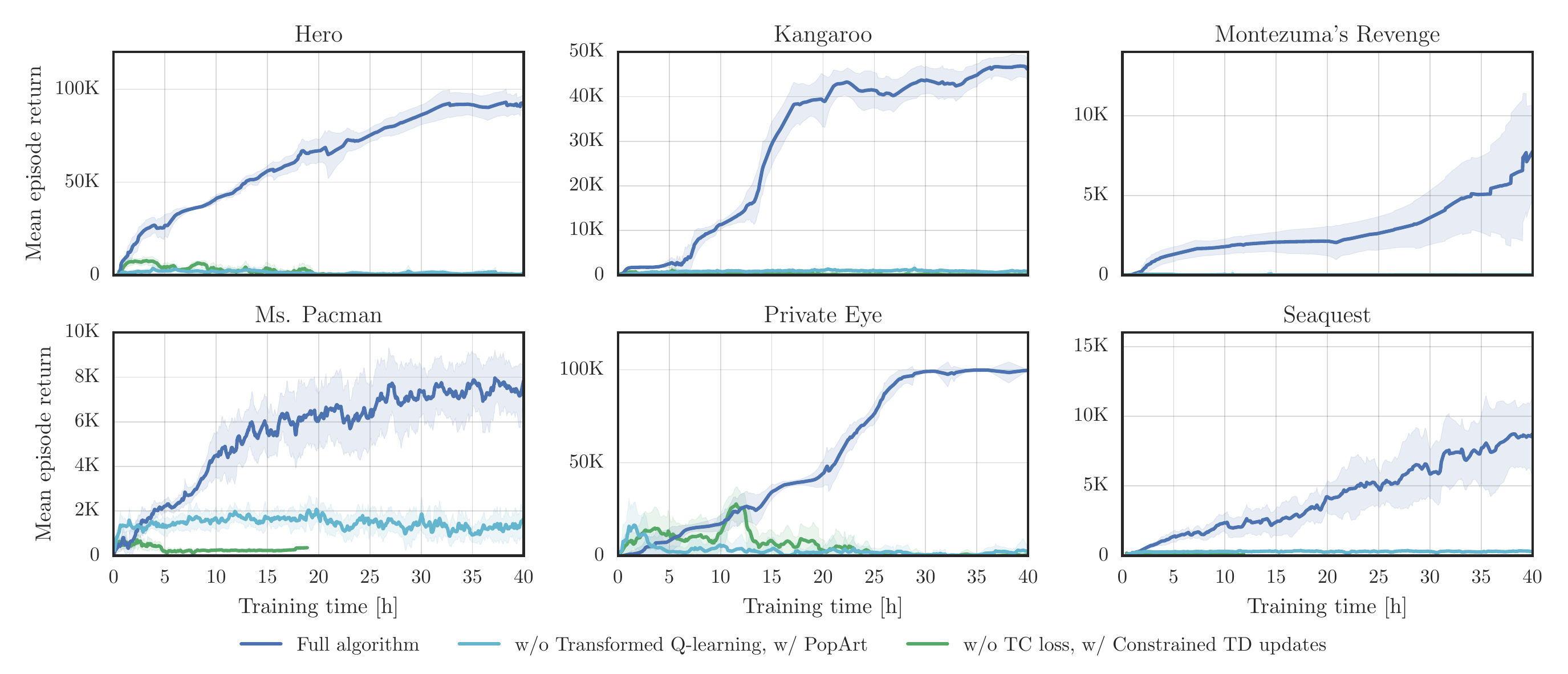}
\vspace{-5mm}
\caption{The figures show how our algorithm compares when we substitute the transformed Bellman operator to PopArt and when we substitute the TC loss to constrained TD updates.
Note that the scales differ from the ones in Fig.~\ref{fig:ablation} because the experiments only ran for 40 hours.}
\label{fig:related}
\vspace{-3mm}
\end{figure}
The problems of handling diverse reward distributions and network over-generalization in deep RL have been partially addressed in the literature (see Sec.~\ref{sec:RelatedWork}).
Specifically, \citet{vanhasselt2016popart} proposed PopArt and \citet{durugkar2017constrained} used constrained TD updates. 
We evaluate the performance of our algorithm when using alternative solutions and report the results in Fig.~\ref{fig:related}.

\textbf{PopArt (\linereference{ablation_dqfd_transform}):} 
We use the standard Bellman operator $\mathcal{T}$ in combination with PopArt, which adaptively normalizes the targets in (\ref{eq:dqn}) to have zero mean and unit variance.
While the modified algorithm manages to learn in some games, the overall performance is significantly worse than Ape-X DQfD.
One possible limiting factor that makes PopArt a bad choice for our framework is that training batches contain highly rewarding states from the very beginning of training.
SGD updates performed before the moving statistics have adequately adapted the moments of the target distribution might result in catastrophic changes to the network's weights. 

\textbf{Constrained TD updates (\linereference{ablation_dqfd_tc}):} 
We replaced the TC-loss with the constrained TD update approach \cite{durugkar2017constrained} that removes the target network and constrains the gradient to prevent an SGD update from changing the predictions at the next state.
We did not find the approach to work in our case.

\vspace{-1mm}
\section{Conclusion}
\vspace{-1mm}

In this paper, we presented a deep Reinforcement Learning (RL) algorithm that achieves human-level performance on a wide variety of MDPs on the Atari 2600 benchmark.
It does so by addressing three challenges:
handling diverse reward distributions, acting over longer time horizons, and efficiently exploring on sparse reward tasks. 
We introduce novel approaches for each of these challenges: 
a transformed Bellman operator, a temporal consistency loss, and a distributed RLED framework for learning from human demonstrations and task reward.
Our algorithm exceeds the performance of an average human on 40 out of 42 Atari 2600 games and it is the first deep RL algorithm to complete the first level of \textsc{Montezuma's Revenge}.

\bibliographystyle{plainnat}
\bibliography{references}

\begin{thebibliography}{22}
\providecommand{\natexlab}[1]{#1}
\providecommand{\url}[1]{\texttt{#1}}
\expandafter\ifx\csname urlstyle\endcsname\relax
  \providecommand{\doi}[1]{doi: #1}\else
  \providecommand{\doi}{doi: \begingroup \urlstyle{rm}\Url}\fi

\bibitem[Atkeson and Schaal(1997)]{atkeson1997robot}
Christopher Atkeson and Stefan Schaal.
\newblock Robot learning from demonstration.
\newblock In \emph{Proc. of ICML}, 1997.

\bibitem[Bellemare et~al.(2015)Bellemare, Naddaf, Veness, and
  Bowling]{bellemare2015arcade}
Marc Bellemare, Yavar Naddaf, Joel Veness, and Michael Bowling.
\newblock The {A}rcade {L}earning {E}nvironment: An evaluation platform for
  general agents.
\newblock In \emph{Proc. of IJCAI}, 2015.

\bibitem[Chemali and Lazaric(2015)]{chemali2015direct}
Jessica Chemali and Alessandro Lazaric.
\newblock Direct policy iteration with demonstrations.
\newblock In \emph{Proc. of IJCAI}, 2015.

\bibitem[Durugkar and Stone(2017)]{durugkar2017constrained}
Ishan Durugkar and Peter Stone.
\newblock {TD} learning with constrained gradients.
\newblock In \emph{Deep Reinforcement Learning Symposium, NIPS}, 2017.

\bibitem[Gruslys et~al.(2018)Gruslys, Dabney, Gheshlaghi~Azar, Piot, Bellemare,
  and Munos]{gruslys2018the}
Audrunas Gruslys, Will Dabney, Mohammad Gheshlaghi~Azar, Bilal Piot, Marc
  Bellemare, and Remi Munos.
\newblock The reactor: A fast and sample-efficient actor-critic agent for
  reinforcement learning.
\newblock In \emph{Proc. of ICLR}, 2018.

\bibitem[Hessel et~al.(2018)Hessel, Modayil, Van~Hasselt, Schaul, Ostrovski,
  Dabney, Horgan, Piot, Azar, and Silver]{hessel2017rainbow}
Matteo Hessel, Joseph Modayil, Hado Van~Hasselt, Tom Schaul, Georg Ostrovski,
  Will Dabney, Dan Horgan, Bilal Piot, Mohammad~G. Azar, and David Silver.
\newblock Rainbow: Combining improvements in deep reinforcement learning.
\newblock In \emph{Proc. of AAAI}, 2018.

\bibitem[Hester et~al.(2018)Hester, Vecerik, Pietquin, Lanctot, Schaul, Piot,
  Sendonaris, Dulac-Arnold, Osband, and Agapiou]{hester2017learning}
Todd Hester, Matej Vecerik, Olivier Pietquin, Marc Lanctot, Tom Schaul, Bilal
  Piot, Andrew Sendonaris, Gabriel Dulac-Arnold, Ian Osband, and John Agapiou.
\newblock Deep {Q}-learning from demonstrations.
\newblock \emph{Proc. of AAAI}, 2018.

\bibitem[Horgan et~al.(2018)Horgan, Quan, Budden, Barth-Maron, Hessel, van
  Hasselt, and Silver]{horgan2018distributed}
Dan Horgan, John Quan, David Budden, Gabriel Barth-Maron, Matteo Hessel, Hado
  van Hasselt, and David Silver.
\newblock Distributed prioritized experience replay.
\newblock In \emph{Proc. of ICLR}, 2018.

\bibitem[Huber(1964)]{huber1964robust}
Peter~J. Huber.
\newblock Robust estimation of a location parameter.
\newblock \emph{Ann. Math. Statist.}, 35\penalty0 (1):\penalty0 73--101, 03
  1964.

\bibitem[Jiang et~al.(2015)Jiang, Kulesza, Singh, and
  Lewis]{jiang2015dependence}
Nan Jiang, Alex Kulesza, Satinder Singh, and Richard Lewis.
\newblock The dependence of effective planning horizon on model accuracy.
\newblock In \emph{Proceedings of the 2015 International Conference on
  Autonomous Agents and Multiagent Systems}, pages 1181--1189. International
  Foundation for Autonomous Agents and Multiagent Systems, 2015.

\bibitem[Kim et~al.(2013)Kim, Farahmand, Pineau, and Precup]{kim2013learning}
Beomjoon Kim, Amir-massoud Farahmand, Joelle Pineau, and Doina Precup.
\newblock Learning from limited demonstrations.
\newblock In \emph{Proc. of NIPS}, 2013.

\bibitem[Lipton et~al.(2016)Lipton, Li, Gao, Li, Ahmed, and
  Deng]{lipton2017bbq}
Zachary Lipton, Xiujun Li, Jianfeng Gao, Lihong Li, Faisal Ahmed, and Li~Deng.
\newblock Bbq-networks: Efficient exploration in deep reinforcement learning
  for task-oriented dialogue systems.
\newblock In \emph{Proc. of AAAI}, 2016.

\bibitem[Mnih et~al.(2015)Mnih, Kavukcuoglu, Silver, Rusu, Veness, Bellemare,
  Graves, Riedmiller, Fidjeland, Ostrovski, et~al.]{mnih2015human}
Volodymyr Mnih, Koray Kavukcuoglu, David Silver, Andrei~A Rusu, Joel Veness,
  Marc~G Bellemare, Alex Graves, Martin Riedmiller, Andreas~K Fidjeland, Georg
  Ostrovski, et~al.
\newblock Human-level control through deep reinforcement learning.
\newblock \emph{Nature}, 518\penalty0 (7540):\penalty0 529--533, 2015.

\bibitem[Nair et~al.(2017)Nair, McGrew, Andrychowicz, Zaremba, and
  Abbeel]{nair2017overcoming}
Ashvin Nair, Bob McGrew, Marcin Andrychowicz, Wojciech Zaremba, and Pieter
  Abbeel.
\newblock Overcoming exploration in reinforcement learning with demonstrations.
\newblock \emph{arXiv preprint arXiv:1709.10089}, 2017.

\bibitem[Piot et~al.(2014)Piot, Geist, and Pietquin]{piot2014boosted}
Bilal Piot, Matthieu Geist, and Olivier Pietquin.
\newblock Boosted bellman residual minimization handling expert demonstrations.
\newblock In \emph{Proc. of ECML/PKDD}, 2014.

\bibitem[Puterman(1994)]{puterman1994markov}
Marc~L. Puterman.
\newblock \emph{{Markov decision processes: discrete stochastic dynamic
  programming}}.
\newblock John Wiley \& Sons, 1994.

\bibitem[Schaal(1997)]{schaal1997learning}
Stefan Schaal.
\newblock Learning from demonstration.
\newblock In \emph{Proc. of NIPS}, 1997.

\bibitem[Schaul et~al.(2015)Schaul, Quan, Antonoglou, and
  Silver]{schaul2015prioritized}
Tom Schaul, John Quan, Ioannis Antonoglou, and David Silver.
\newblock Prioritized experience replay.
\newblock In \emph{Proc. of ICLR}, 2015.

\bibitem[van Hasselt et~al.(2016{\natexlab{a}})van Hasselt, Guez, Hessel, Mnih,
  and Silver]{vanhasselt2016popart}
Hado van Hasselt, Arthur Guez, Matteo Hessel, Volodymyr Mnih, and David Silver.
\newblock Learning values across many orders of magnitude.
\newblock In \emph{Proc. of NIPS}, 2016{\natexlab{a}}.

\bibitem[van Hasselt et~al.(2016{\natexlab{b}})van Hasselt, Guez, and
  Silver]{van2016deep}
Hado van Hasselt, Arthur Guez, and David Silver.
\newblock Deep reinforcement learning with double {Q}-learning.
\newblock In \emph{Proc. of AAAI}, 2016{\natexlab{b}}.

\bibitem[Ve{\v{c}}er{\'\i}k et~al.(2017)Ve{\v{c}}er{\'\i}k, Hester, Scholz,
  Wang, Pietquin, Piot, Heess, Roth{\"o}rl, Lampe, and
  Riedmiller]{vevcerik2017leveraging}
Matej Ve{\v{c}}er{\'\i}k, Todd Hester, Jonathan Scholz, Fumin Wang, Olivier
  Pietquin, Bilal Piot, Nicolas Heess, Thomas Roth{\"o}rl, Thomas Lampe, and
  Martin Riedmiller.
\newblock Leveraging demonstrations for deep reinforcement learning on robotics
  problems with sparse rewards.
\newblock \emph{arXiv preprint arXiv:1707.08817}, 2017.

\bibitem[Wang et~al.(2016)Wang, Schaul, Hessel, Hasselt, Lanctot, and
  de~Freitas]{wang2016dueling}
Ziyu Wang, Tom Schaul, Matteo Hessel, Hado~van Hasselt, Marc Lanctot, and Nando
  de~Freitas.
\newblock Dueling network architectures for deep reinforcement learning.
\newblock In \emph{Proc. of ICML}, 2016.

\end{thebibliography}

\newpage


\appendix

\section{Transformed Bellman Operator in Stochastic MDPs}
\label{sec:stochastic}

The following proposition shows that transformed Bellman operator is still a contraction  for  small $\gamma$ if we assume a stochastic MDP and a more generic choice of $h$. 
However, the fixed point might not be $h \circ Q^*$.

\begin{prop}
\label{prop:stochastic}
Let $h$ be strictly monotonically increasing, Lipschitz continuous with Lipschitz constant $L_h$, and have a Lipschitz continuous inverse $h^{-1}$ with  Lipschitz constant $L_{h^{-1}}$. For $\gamma < \frac{1}{L_hL_{h^{-1}}}$, $\mathcal{T}_h$ is a contraction.
\end{prop}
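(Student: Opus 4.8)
The plan is to prove the stronger quantitative statement that $\mathcal{T}_h$ is a $\gamma L_h L_{h^{-1}}$-contraction in the supremum norm on $\mathcal{X} \times \mathcal{A}$; the claim then follows immediately, since $\gamma < \frac{1}{L_h L_{h^{-1}}}$ gives $\gamma L_h L_{h^{-1}} < 1$. Fix two bounded functions $Q_1, Q_2 : \mathcal{X} \times \mathcal{A} \to \mathbb{R}$ and an arbitrary pair $(x,a)$. First I would write $(\mathcal{T}_h Q_1)(x,a) - (\mathcal{T}_h Q_2)(x,a)$ as a single expectation over $x' \sim P(\cdot \mid x,a)$ of the difference $h\!\left(R(x,a) + \gamma \max_{a'} h^{-1}(Q_1(x',a'))\right) - h\!\left(R(x,a) + \gamma \max_{a'} h^{-1}(Q_2(x',a'))\right)$, and bound its absolute value by the expectation of the absolute value, so that the stochasticity of $P$ (and of $R$) drops out harmlessly.

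Second, inside the expectation I would peel off three factors in turn. (a) Lipschitz continuity of $h$ contributes a factor $L_h$ and cancels the common term $R(x,a)$, leaving $\gamma \,\bigl|\max_{a'} h^{-1}(Q_1(x',a')) - \max_{a'} h^{-1}(Q_2(x',a'))\bigr|$. (b) The elementary inequality $\bigl|\max_{a'} u(a') - \max_{a'} v(a')\bigr| \le \max_{a'} |u(a') - v(a')|$ moves the absolute value inside the maximum. (c) Lipschitz continuity of $h^{-1}$ contributes a factor $L_{h^{-1}}$, leaving $\gamma L_h L_{h^{-1}} \max_{a'} |Q_1(x',a') - Q_2(x',a')| \le \gamma L_h L_{h^{-1}} \|Q_1 - Q_2\|_\infty$. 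Taking the expectation over $x'$ and then the supremum over $(x,a)$ yields $\|\mathcal{T}_h Q_1 - \mathcal{T}_h Q_2\|_\infty \le \gamma L_h L_{h^{-1}} \|Q_1 - Q_2\|_\infty$, which is the desired contraction property; Banach's fixed point theorem then guarantees a unique fixed point, which (as the surrounding text observes) need not coincide with $h \circ Q^*$.

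There is no deep obstacle here — the argument is a short chain of Lipschitz estimates — but the step that most deserves care is checking that every bound passes cleanly through the expectation over $x'$ and through the $\max$ over $a'$; in particular, strict monotonicity of $h$ is exactly what makes $h^{-1}$ well defined on the relevant range, and boundedness of $Q_1, Q_2$ is what keeps all quantities finite so the $\|\cdot\|_\infty$ manipulations are legitimate. I would also flag explicitly that for the $h$ actually used in the algorithm the constant $\gamma L_h L_{h^{-1}}$ can exceed $1$ unless $\gamma$ is small, which is precisely why the proposition is restricted to $\gamma < \frac{1}{L_h L_{h^{-1}}}$ and why the deterministic case in Proposition~\ref{prop:trivial} had to be handled separately.
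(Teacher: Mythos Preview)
Your argument is correct and follows essentially the same route as the paper's own proof: bound the absolute value of the expectation by the expectation of the absolute value (what the paper labels as Jensen's inequality), then successively apply the Lipschitz property of $h$, the inequality $|\max_{a'} u - \max_{a'} v| \le \max_{a'} |u - v|$, and the Lipschitz property of $h^{-1}$ to obtain the $\gamma L_h L_{h^{-1}}$ contraction factor. The only cosmetic difference is that the paper presents this as one displayed chain of inequalities rather than your itemized steps (a)--(c).
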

\begin{proof}
Let $Q, U : \mathcal{X} \times \mathcal{A} \to \mathbb{R}$ be arbitrary. It 
holds
\begin{align*}
\|\mathcal{T}_hQ - \mathcal{T}_hU\|_\infty =& \max_{x, a \in \mathcal{X} \times 
\mathcal{A}} \left| \mathbb{E}_{x' \sim P(\cdot | x, a)}\left[h\left( R(x, a) + 
\gamma \max_{a' \in \mathcal{A}} h^{-1}(Q(x', a')) \right) \right.\right.\\
&\left. \left. - h\left( R(x, a) 
+ \gamma \max_{a' \in \mathcal{A}} h^{-1}(Q(x', a')) - \right)\right]\right| \\
\stackrel{(1)}{\leq} & L_h \gamma \max_{x, a \in \mathcal{X} \times 
\mathcal{A}} 
\mathbb{E}_{x' \sim P(\cdot | x, a)}\left[ \left| \max_{a' \in \mathcal{A}} 
h^{-1}(Q(x', a')) - \max_{a' \in \mathcal{A}} h^{-1}(U(x', a'))\right|\right]\\
\leq & L_h \gamma \max_{x, a \in \mathcal{X} \times \mathcal{A}} 
\mathbb{E}_{x' \sim P(\cdot | x, a)}\left[ \max_{a' \in \mathcal{A}}\left|  
h^{-1}(Q(x', a')) - h^{-1}(U(x', a'))\right|\right] \\
\stackrel{(2)}{\leq} & L_h L_{h^{-1}} \gamma \max_{x, a \in \mathcal{X} \times 
\mathcal{A}} 
\mathbb{E}_{x' \sim P(\cdot | x, a)}\left[ \max_{a' \in \mathcal{A}}\left|  
Q(x', a') - U(x', a')\right|\right] \\
\leq & \underbrace{L_h L_{h^{-1}} \gamma}_{<1} \|Q - U\|_\infty < \|Q - 
U\|_\infty
\end{align*}
where we used Jensen's inequality in (1) and the Lipschitz properties of $h$ 
and $h^{-1}$ in (1) and (2).
\end{proof}
For our algorithm, we use $h : \mathbb{R} \to \mathbb{R}, x \mapsto 
\text{sign}(x)(\sqrt{|x| + 1} - 1) + \varepsilon x$ with $\varepsilon = 
10^{-2}$. While Proposition~\ref{prop:h} shows that the transformed operator is a contraction, the discount factor 
$\gamma$ we use in practice is higher than $\frac{1}{L_hL_{h^{-1}}}$. We leave a deeper 
investigation of the contraction properties of $\mathcal{T}_h$ in stochastic MDPs
for future work.
\begin{prop}
\label{prop:h}
Let $\varepsilon > 0$ and $h : \mathbb{R} \to \mathbb{R}, x \mapsto 
\text{sign}(x)(\sqrt{|x| + 1} - 1) + \varepsilon x$. It holds
\begin{enumerate}[(i)]
\item $h$ is strictly monotonically increasing.
\item $h$ is Lipschitz continuous with Lipschitz constant $L_h = \frac{1}{2} + 
\varepsilon$.
\item $h$ is invertible with $h^{-1} : \mathbb{R} \to \mathbb{R}, x \mapsto 
\text{sign}(x) \left( \left( \frac{\sqrt{1 + 4\varepsilon (|x| + 1 + 
\varepsilon)} - 1}{2 \varepsilon} \right)^2 -1 \right)$.
\item $h^{-1}$ is strictly monotonically increasing.
\item $h^{-1}$ is Lipschitz continuous with Lipschitz constant $L_{h^{-1}} = 
\frac{1}{\varepsilon}$.
\end{enumerate}
\end{prop}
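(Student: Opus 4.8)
The plan is to reduce everything to elementary one-variable calculus by first exploiting that $h$ is an \emph{odd} function, since both $x \mapsto \operatorname{sign}(x)(\sqrt{|x|+1}-1)$ and $x \mapsto \varepsilon x$ are odd. Thus it suffices to understand $h$ on $[0,\infty)$, where $h(x) = \sqrt{x+1} - 1 + \varepsilon x$ is smooth with $h'(x) = \frac{1}{2\sqrt{x+1}} + \varepsilon$. First I would check that this derivative extends continuously across $0$ — the left- and right-hand derivatives of $\operatorname{sign}(x)(\sqrt{|x|+1}-1)$ at $0$ both equal $\frac{1}{2}$ — so that in fact $h \in C^1(\mathbb{R})$ with $h'(x) = \frac{1}{2\sqrt{|x|+1}} + \varepsilon$ everywhere, and consequently $0 < \varepsilon < h'(x) \le \frac{1}{2} + \varepsilon$ for all $x$. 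Parts (i) and (ii) are then immediate: $h' > 0$ gives strict monotonicity, and the mean value theorem gives the Lipschitz bound with constant $\frac{1}{2} + \varepsilon$, which is attained (since $h'(0) = \frac{1}{2} + \varepsilon$) and hence optimal.

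For (iii) I would \emph{derive} the inverse rather than merely verify the stated formula. On the branch $x \ge 0$, $y \ge 0$, substituting $u := \sqrt{x+1} \ge 1$ turns $y = h(x)$ into the quadratic $\varepsilon u^2 + u - (1 + \varepsilon + y) = 0$, whose roots are $u = \frac{-1 \pm \sqrt{1 + 4\varepsilon(1 + \varepsilon + y)}}{2\varepsilon}$; I would argue that only the $+$ root is admissible (the other is negative) and that this root satisfies $u \ge 1$ exactly when $y \ge 0$. Then $x = u^2 - 1 = \bigl(\frac{\sqrt{1 + 4\varepsilon(|y| + 1 + \varepsilon)} - 1}{2\varepsilon}\bigr)^2 - 1$, which, extended by oddness through the $\operatorname{sign}$ and $|\cdot|$ bookkeeping, is precisely the claimed $h^{-1}$. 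Since $h$ is continuous, strictly increasing, and satisfies $h(x) \to \pm\infty$ as $x \to \pm\infty$ (both summands are unbounded), $h$ is a bijection of $\mathbb{R}$, so this $h^{-1}$ is a genuine two-sided inverse.

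Parts (iv) and (v) then follow with no further manipulation of the explicit formula. The inverse of a strictly increasing bijection is strictly increasing, giving (iv). For (v), since $h \in C^1(\mathbb{R})$ with $h'$ nowhere zero, the inverse function theorem gives $h^{-1} \in C^1(\mathbb{R})$ with $(h^{-1})'(y) = 1/h'(h^{-1}(y))$; from $h'(x) > \varepsilon$ for all $x$ we get $0 < (h^{-1})'(y) < 1/\varepsilon$, and because $h'(x) \to \varepsilon$ as $|x| \to \infty$ this bound is sharp, so the mean value theorem yields Lipschitz constant $1/\varepsilon$. I do not anticipate any genuine obstacle: the only step needing care is the sign/absolute-value bookkeeping in (iii) together with selecting the correct root of the quadratic and confirming the branch condition $u \ge 1 \Leftrightarrow y \ge 0$; everything else is routine.
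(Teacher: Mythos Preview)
Your proposal is correct. For (i) and (ii) it coincides with the paper: both compute $h'(x)=\frac{1}{2\sqrt{|x|+1}}+\varepsilon$ (after checking differentiability at $0$), then use positivity of $h'$ for monotonicity and the mean value theorem with $\sup h'=\frac{1}{2}+\varepsilon$ for the Lipschitz bound. The approaches diverge for (iii)--(v). For (iii) the paper merely says ``simple substitution shows $h\circ h^{-1}=h^{-1}\circ h=\text{id}$,'' whereas you actually \emph{derive} the inverse via the quadratic in $u=\sqrt{x+1}$; your argument is more informative and also explicitly addresses surjectivity and the branch condition $u\ge 1\Leftrightarrow y\ge 0$, which the paper glosses over. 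For (iv) and (v) the paper takes the heavier route: it proves a separate lemma computing $(h^{-1})'(x)=\frac{1}{\varepsilon}\bigl(1-\frac{1}{\sqrt{1+4\varepsilon(|x|+1+\varepsilon)}}\bigr)$ directly from the closed-form expression (including another l'H\^opital check at $0$), and then reads off positivity and $\sup|(h^{-1})'|=1/\varepsilon$ from that formula. Your route---monotonicity of the inverse of a strictly increasing bijection for (iv), and the inverse function theorem combined with the already established bound $h'>\varepsilon$ for (v)---is cleaner and avoids differentiating the more complicated expression altogether. The only thing the paper's route buys is the explicit formula for $(h^{-1})'$, which is not used elsewhere.
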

We use the following Lemmas in order to prove Proposition~\ref{prop:h}.
\begin{lemma}
\label{lemma:h}
$h : \mathbb{R} \to \mathbb{R}, x \mapsto 
\text{sign}(x)(\sqrt{|x| + 1} - 1) + \varepsilon x$ is differentiable
everywhere with derivative $\frac{d}{dx}h(x) = \frac{1}{2\sqrt{|x| + 1}} + \varepsilon$
for all $x \in \mathbb{R}$.
\end{lemma}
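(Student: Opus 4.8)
The plan is to split the analysis into the three regimes $x>0$, $x<0$, and $x=0$. On the two open half-lines $h$ is a composition of elementary differentiable functions, so there the derivative formula follows from a direct application of the chain rule; the only genuine work is at the origin, where the two branches are glued together.

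First, for $x > 0$ we have $|x| = x$ and $\operatorname{sign}(x) = 1$, so $h(x) = \sqrt{x+1} - 1 + \varepsilon x$, and differentiating gives $h'(x) = \frac{1}{2\sqrt{x+1}} + \varepsilon = \frac{1}{2\sqrt{|x|+1}} + \varepsilon$. Symmetrically, for $x < 0$ we have $|x| = -x$ and $\operatorname{sign}(x) = -1$, so $h(x) = -(\sqrt{1-x}-1) + \varepsilon x = 1 - \sqrt{1-x} + \varepsilon x$, whose derivative is $\frac{1}{2\sqrt{1-x}} + \varepsilon = \frac{1}{2\sqrt{|x|+1}} + \varepsilon$. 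Hence on $\mathbb{R}\setminus\{0\}$ the function is differentiable with the claimed derivative.

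Second, the crux is $x = 0$, where $h(0) = 0$. Rather than invoking one-sided derivatives of the branches (which would require an extra continuity-of-derivative argument), I would evaluate the difference quotient directly: $\frac{h(x)-h(0)}{x} = \frac{\operatorname{sign}(x)(\sqrt{|x|+1}-1)}{x} + \varepsilon$. Using the rationalization identity $\sqrt{|x|+1}-1 = \frac{|x|}{\sqrt{|x|+1}+1}$ together with $\operatorname{sign}(x)\,|x| = x$, this simplifies to $\frac{1}{\sqrt{|x|+1}+1} + \varepsilon$, which tends to $\frac{1}{2} + \varepsilon$ as $x \to 0$ from either side. Therefore $h'(0)$ exists and equals $\frac{1}{2} + \varepsilon = \frac{1}{2\sqrt{|0|+1}} + \varepsilon$, which finishes the proof.

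The main obstacle is precisely this bookkeeping at the origin: one must treat $x=0$ by hand because the piecewise expression $\operatorname{sign}(x)(\sqrt{|x|+1}-1)$ is not a priori smooth there. The rationalization identity $\sqrt{|x|+1}-1 = \frac{|x|}{\sqrt{|x|+1}+1}$ is the key technical device — it both makes the limit of the difference quotient trivial to compute and makes the agreement of the left- and right-hand limits manifest, so no separate continuity argument is needed.
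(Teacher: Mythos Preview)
Your proof is correct and follows the same three-case decomposition as the paper: differentiate directly on each open half-line, then handle $x=0$ by computing the limit of the difference quotient. The only difference is the device used at the origin: the paper applies l'H\^opital's rule separately to the right- and left-hand difference quotients, whereas you rationalize via $\sqrt{|x|+1}-1=\frac{|x|}{\sqrt{|x|+1}+1}$ and use $\operatorname{sign}(x)\,|x|=x$ to treat both sides at once---a slightly more elementary and self-contained computation, but not a genuinely different route.
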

\begin{proof}[Proof of Lemma~\ref{lemma:h}]
For $x > 0$, $h$ is differentiable as a composition of differentiable functions with
$\frac{d}{dx}h(x) = \frac{1}{2\sqrt{x + 1}} + \varepsilon$. Analogously, $h$ is 
differentiable for $x < 0$ with $\frac{d}{dx}h(x) = \frac{1}{2\sqrt{-x + 1}}+ \varepsilon$.
For $x = 0$, we find
\begin{align*}
    \lim_{z \rightarrow 0^+} \frac{h(x + z) - \overbrace{h(x)}^{=0}}{z} = \lim_{z \rightarrow 0^+} \frac{\sqrt{z + 1} - 1 + \varepsilon z}{z} \stackrel{\text{l'Hospital}}{=} \lim_{z \rightarrow 0^+} \overbrace{\frac{1}{2\sqrt{z + 1}}}^{\rightarrow \frac{1}{2}} + \varepsilon = \frac{1}{2} + \varepsilon
\end{align*}
and similarly $\lim_{z \rightarrow 0^-} \frac{h(x + z) - h(x)}{z} = \frac{1}{2} + \varepsilon$.
Hence, $\frac{d}{dx}h(x) = \frac{1}{2\sqrt{|x| + 1}} + \varepsilon$ for all $x \in \mathbb{R}$.
\end{proof}
\begin{lemma}
\label{lemma:h_inv}
$h^{-1} : \mathbb{R} \to \mathbb{R}, x \mapsto 
\text{sign}(x) \left( \left( \frac{\sqrt{1 + 4\varepsilon (|x| + 1 + 
\varepsilon)} - 1}{2 \varepsilon} \right)^2 -1 \right)$ is differentiable everywhere
with derivative $\frac{d}{dx}h^{-1}(x) = \frac{1}{\varepsilon}\left(1 - \frac{1}{\sqrt{1 + 4\varepsilon(|x| + 1 + \varepsilon)}} \right)$ for all $x \in \mathbb{R}$.
\end{lemma}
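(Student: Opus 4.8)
The plan is to prove Lemma~\ref{lemma:h_inv} by differentiating the closed-form expression for $h^{-1}$ directly, using the chain rule on each region $x>0$ and $x<0$ separately, and then treating $x=0$ via one-sided limits exactly as was done for $h$ in the proof of Lemma~\ref{lemma:h}. The only genuine subtlety is the $\text{sign}(x)$ factor together with the $|x|$ inside the radical, so I would exploit the symmetry: writing $g(u) := \left(\frac{\sqrt{1 + 4\varepsilon(u + 1 + \varepsilon)} - 1}{2\varepsilon}\right)^2 - 1$ for $u \geq 0$, we have $h^{-1}(x) = \text{sign}(x)\, g(|x|)$, and since $\frac{d}{dx}|x| = \text{sign}(x)$ for $x \neq 0$, the two sign factors cancel and $\frac{d}{dx}h^{-1}(x) = g'(|x|)$ on $\mathbb{R}\setminus\{0\}$.

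First I would compute $g'(u)$. Abbreviating $s := \sqrt{1 + 4\varepsilon(u + 1 + \varepsilon)}$, note $\frac{ds}{du} = \frac{4\varepsilon}{2s} = \frac{2\varepsilon}{s}$, so
\begin{align*}
g'(u) = 2 \cdot \frac{s - 1}{2\varepsilon} \cdot \frac{1}{2\varepsilon} \cdot \frac{ds}{du} = \frac{s - 1}{2\varepsilon^2} \cdot \frac{2\varepsilon}{s} = \frac{s - 1}{\varepsilon s} = \frac{1}{\varepsilon}\left(1 - \frac{1}{s}\right),
\end{align*}
which is exactly $\frac{1}{\varepsilon}\left(1 - \frac{1}{\sqrt{1 + 4\varepsilon(|x| + 1 + \varepsilon)}}\right)$ once we substitute $u = |x|$. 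Then I would note that on $x > 0$, $h^{-1}$ is a composition and product of differentiable functions, hence differentiable with the stated derivative; the case $x < 0$ follows by the oddness of $h^{-1}$ (or an identical computation).

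For $x = 0$, I would check the one-sided difference quotients. Since $h^{-1}(0) = 0$ (because $s = \sqrt{1 + 4\varepsilon(1 + \varepsilon)} = \sqrt{(1 + 2\varepsilon)^2} = 1 + 2\varepsilon$, giving $g(0) = \left(\frac{2\varepsilon}{2\varepsilon}\right)^2 - 1 = 0$), we have $\lim_{z \to 0^+}\frac{h^{-1}(z) - 0}{z} = \lim_{z \to 0^+}\frac{g(z)}{z}$, which by l'Hospital equals $\lim_{z\to 0^+} g'(z) = \frac{1}{\varepsilon}\left(1 - \frac{1}{1 + 2\varepsilon}\right) = \frac{1}{\varepsilon}\cdot\frac{2\varepsilon}{1+2\varepsilon} = \frac{2}{1 + 2\varepsilon}$, and the left-hand limit agrees by symmetry. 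Hence $h^{-1}$ is differentiable at $0$ with derivative $\frac{2}{1+2\varepsilon} = \frac{1}{\varepsilon}\left(1 - \frac{1}{\sqrt{1 + 4\varepsilon(0 + 1 + \varepsilon)}}\right)$, matching the formula, and the proof is complete. I do not anticipate any real obstacle here: the main thing to be careful about is bookkeeping the $\text{sign}(x)$ and $|x|$ through the chain rule, and verifying the value $h^{-1}(0) = 0$ so that the $x = 0$ limit argument is set up correctly; alternatively one could entirely sidestep the direct computation by invoking the inverse function theorem together with Lemma~\ref{lemma:h}, since $h' > 0$ everywhere, but the direct verification is cleaner for extracting the explicit closed form.
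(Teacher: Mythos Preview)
Your proposal is correct and follows essentially the same structure as the paper's proof: differentiability for $x \neq 0$ via composition of differentiable functions, and the case $x = 0$ handled by one-sided difference quotients with l'Hospital, arriving at the same value $\frac{2}{1+2\varepsilon} = \frac{1}{\frac{1}{2}+\varepsilon}$. Your write-up is in fact more thorough than the paper's, since you explicitly compute $g'(u)$ and carefully track the $\text{sign}(x)$ and $|x|$ factors, whereas the paper simply asserts differentiability for $x\neq 0$ without displaying the derivative computation.
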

\begin{proof}[Proof of Lemma~\ref{lemma:h_inv}]
For $x \neq 0$, $h^{-1}$ is differentiable as a composition of differentiable functions.
For $x = 0$, it holds
\begin{align*}
    \lim_{z \rightarrow 0^+} \frac{h^{-1}(x + z) - \overbrace{h^{-1}(x)}^{=0}}{z} =& \lim_{z \rightarrow 0^+} \frac{1}{z} \frac{(\sqrt{1 + 4\varepsilon(z + 1 + \varepsilon)} - 1)^2 - 1}{4\varepsilon^2} \\
    \stackrel{\text{l'Hospital}}{=}& \lim_{z \rightarrow 0^+} \frac{1}{4\varepsilon^2} \frac{2(\sqrt{1 + 4\varepsilon(z + 1 + \varepsilon)} - 1)}{2\sqrt{1 + 4\varepsilon(z + 1 + \varepsilon)}} 4\varepsilon \\
    =& \lim_{z \rightarrow 0^+} \frac{1}{\varepsilon} \left( 1 - \frac{1}{\sqrt{1 + 4\varepsilon(z + 1 + \varepsilon)}}\right) = \frac{1}{\frac{1}{2} + \varepsilon}.
\end{align*}
and similarly $\lim_{z \rightarrow 0^-} \frac{h^{-1}(x + z) - h^{-1}(x)}{z} = \frac{1}{\frac{1}{2} + \varepsilon}$, which concludes the proof.
\end{proof}
\begin{proof}[Proof of Proposition~\ref{prop:h}]
We prove all statements individually.
\begin{enumerate}[(i)]
    \item $\frac{d}{dx}h(x) = \frac{1}{2\sqrt{|x| + 1}} + \varepsilon x > 0$ for all $x \in
    \mathbb{R}$, which implies the proposition.
    \item Let $x, y \in \mathbb{R}$ with $x < y$, using the mean value theorem, we find
    \begin{align*}
        |h(x) - h(y)| \leq \sup_{\xi \in (x, y)} \left| \frac{d}{dx}h(\xi) \right| |x - y| \leq \sup_{\xi \in \mathbb{R}}\left| \frac{d}{dx}h(\xi) \right| |x - y| = \underbrace{\left( \frac{1}{2} + \varepsilon \right)}_{= L_h} |x - y|.
    \end{align*}
    \item (i) Implies that $h$ is invertible and simple substitution shows $h \circ h^{-1} = h^{-1} \circ h = \text{id}$.
    \item $\frac{d}{dx}h^{-1}(x) = \frac{1}{\varepsilon}\left(1 - \frac{1}{\sqrt{1 + 4\varepsilon(|x| + 1 + \varepsilon)}} \right) > 0$ for all $x \in \mathbb{R}$, which implies the proposition.
    \item Let $x, y \in \mathbb{R}$ with $x < y$, using the mean value theorem, we find
    \begin{align*}
        |h^{-1}(x) - h^{-1}(y)| \leq \sup_{\xi \in (x, y)} \left| \frac{d}{dx}h^{-1}(\xi) \right| |x - y| \leq \sup_{\xi \in \mathbb{R}}\left| \frac{d}{dx}h^{-1}(\xi) \right| |x - y| = \underbrace{\frac{1}{\varepsilon}}_{= L_{h^{-1}}} |x - y|.
    \end{align*}
    
\end{enumerate}
\end{proof}

\newpage
\section{Learner algorithm}
\begin{algorithm}[!h]
\caption{The algorithm used by the learner to estimate the action-value 
function.}
\label{alg:learner}
\begin{small}
\begin{algorithmic}
\State $\theta^{(0)} \leftarrow$ Random sample
\For{$k = 1,2,...$}
	\State $\theta^{(k)} \leftarrow \theta^{(k - 1)}$
	\For{$j = 1,...,T_\text{update}$}
		\Comment{$T_\text{update}$ is the target network update period}
		\State $(t_i, p_i)_{i = 1}^N \leftarrow 
		$\Call{SamplePrioritized}{N}
		\Comment{Sample 75\% agent and 25\% expert transitions}		
		\State $\theta^{(k)} \leftarrow $ 
		\Call{AdamStep}{$L(\theta^{(k)}; 
		(t_i)_{i = 1}^N, (p_i)_{i = 1}^N, \theta^{(k - 1)})$}
		\Comment{Update the parameters using Adam}
		\State $(p_i)_{i = 1}^N \leftarrow \left(\left| 
		L_\text{TD}(\theta^{(k)}; 
				t_i, 1, \theta^{(k - 1)}) \right| \right)_{i = 1}^N$
		\Comment{Compute the updated priorities based on the TD error}
		\State \Call{UpdatePriorities}{$(t_1, p_1),...,(t_N, p_N)$}
		\Comment{Send the updated priorities to the replay buffers}
	\EndFor
\EndFor
\end{algorithmic}
\end{small}
\end{algorithm}
\section{Experimental setup}
\label{sec:setup}
We evaluate our algorithm on Arcade Learning Environment (ALE) by \citet{bellemare2015arcade}.
While we follow many of the practices commonly applied when training on the 
ALE, our experimental setup differs in a few key aspects from the 
defaults~\cite{mnih2015human,hessel2017rainbow}.

\paragraph{End episode on life loss.}
Most authors who train agents on the ALE choose to end a training episode when 
the agent loses a life. This naturally makes the agent
risk averse as an action that leads to the termination of an episode has a 
value of 0. However, because our expert player was allowed to continue playing 
an episode after losing a life, we follow~\citet{hester2017learning} and only 
terminate a training episode either when the game is over or when the agent has 
performed 50,000 steps which is the episode length used by  
\citet{horgan2018distributed}.

\paragraph{Reward Preprocessing. } As explained in Sec.~3.2,
we do not clip the rewards to the interval $[-1, 1]$. Instead, we use the raw 
and unprocessed rewards provided by each game.

\paragraph{Discount factor. } The majority of approaches use a discount factor of
$\gamma = 0.99$. Empirically, this used to be the highest discount factor that 
allows stable learning on all games. However, the TC loss allows us to use a 
much higher discount factor of $\gamma = 0.999$ giving the algorithm an 
effective planning horizon of 1000 instead of 100 steps.

\paragraph{Expert data.} Instead of relying purely on an $\varepsilon$-greedy
exploration strategy, our algorithm uses expert demonstrations. By using these
demonstrations in the TD loss $L_\text{TD}$, the algorithm gets the experience
rewarding transitions without having to discover them itself.

\newpage
\FloatBarrier

\section{Full experimental results}

\begin{figure}[!h]
    \centering
    \vspace{-4mm}
    \includegraphics[width=0.97\linewidth]{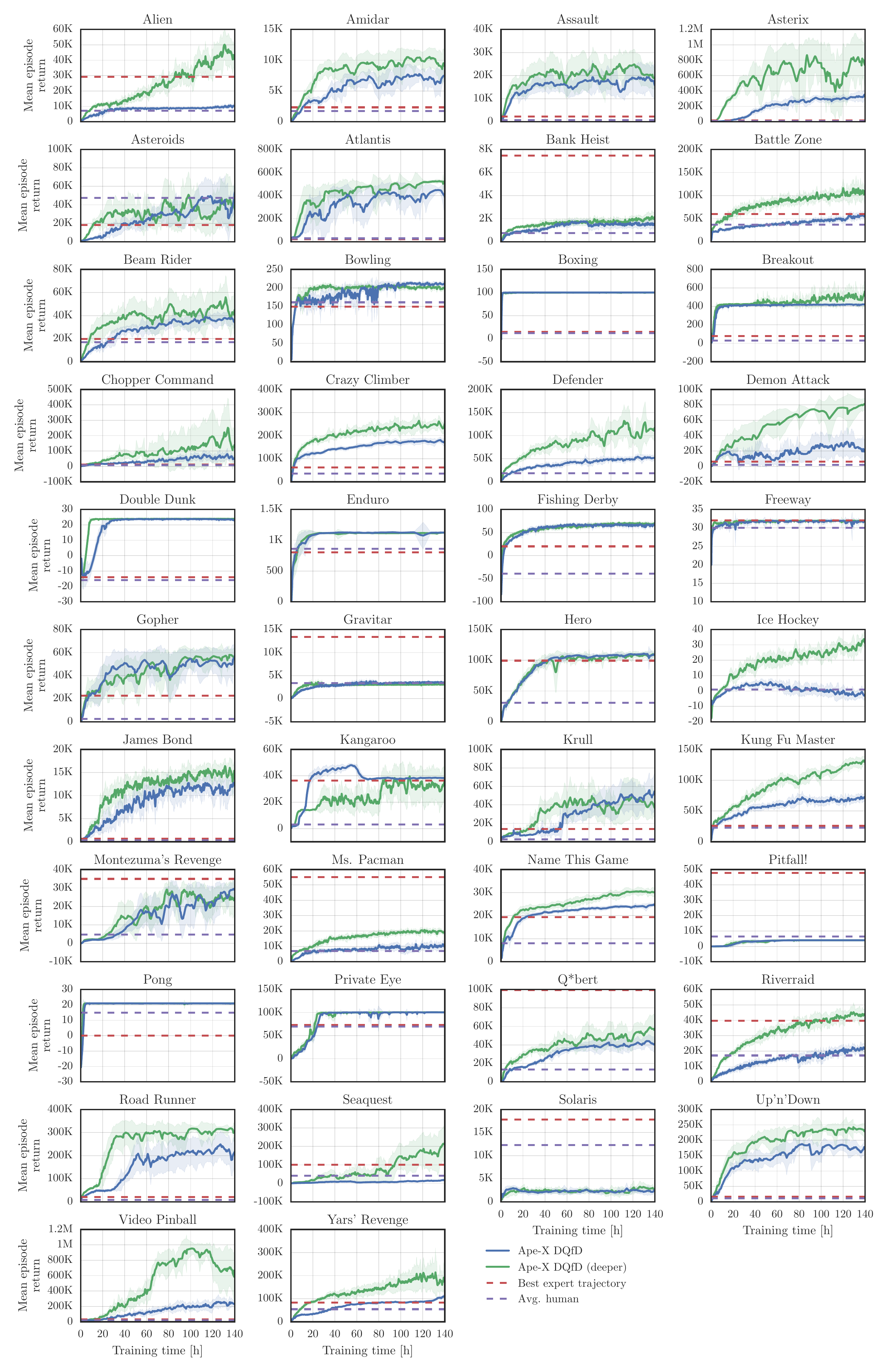}
    \vspace{-3mm}
    \caption{Training curves on all 42 games. We report the performance using the standard
    network architecture~\cite{wang2016dueling} and the slightly deeper version 
    (see Fig.~\ref{fig:networks}).}
    \label{fig:all_games}
\end{figure}

\newpage

\begin{table*}[!h]
\tiny
\begin{tabularx}{\linewidth}{r?R-R-R?R-R?R-R-R}
\toprule
Game& Rainbow & DQfD & Ape-X DQN & Ape-X DQfD & Ape-X DQfD (deeper) & Random & Avg. Human & Expert \\
\midrule
Alien &9491.7 & 4737.5 & 40804.9 & 11313.6 & \textbf{ 50113.6 } & 128.3 & 7128.0 & 29160.0\\
Amidar &5131.2 & 2325.0 & 8659.2 & 8463.8 & \textbf{ 12291.7 } & 11.8 & 1720.0 & 2341.0\\
Assault &14198.5 & 1755.7 & 24559.4 & 22855.0 & \textbf{ 35046.9 } & 166.9 & 742.0 & 2274.0\\
Asterix &\textbf{ 428200.3 } & 5493.6 & 313305.0 & 399888.0 & 418433.5 & 164.5 & 8503.0 & 18100.0\\
Asteroids &2712.8 & 3796.4 & \textbf{ 155495.1 } & 116846.4 & 112573.6 & 871.3 & 47389.0 & 18100.0\\
Atlantis &826659.5 & 920213.9 & 944497.5 & 911025.0 & \textbf{ 1057521.5 } & 13463.0 & 29028.0 & 22400.0\\
Bank  Heist &1358.0 & 1280.2 & 1716.4 & 2061.9 & 2578.9 & 21.7 & 753.0 & \textbf{ 7465.0 }\\
Battle Zone &62010.0 & 41708.2 & 98895.0 & 60540.0 & \textbf{ 128925.0 } & 3560.0 & 37188.0 & 60000.0\\
Beam Rider &16850.2 & 5173.3 & 63305.2 & 47129.4 & \textbf{ 87257.4 } & 254.6 & 16926.0 & 19844.0\\
Bowling &30.0 & 97.0 & 17.6 & \textbf{ 216.3 } & 210.9 & 35.2 & 161.0 & 149.0\\
Boxing &99.6 & 99.1 & \textbf{ 100.0 } & \textbf{ 100.0 } & 98.5 & -0.1 & 12.0 & 15.0\\
Breakout &417.5 & 308.1 & \textbf{ 800.9 } & 419.7 & 641.9 & 1.6 & 30.0 & 79.0\\
Chopper Command &16654.0 & 6993.1 & 721851.0 & 96653.0 & \textbf{ 840023.5 } & 644.0 & 7388.0 & 11300.0\\
Crazy Climber &168788.5 & 151909.5 & \textbf{ 320426.0 } & 176598.5 & 247651.0 & 9337.0 & 35829.0 & 61600.0\\
Defender &55105.0 & 27951.5 & \textbf{ 411943.5 } & 51442.0 & 218006.3 & 1965.5 & 18689.0 & 18700.0\\
Demon Attack &111185.2 & 3848.8 & 133086.4 & 100200.9 & \textbf{ 141444.6 } & 208.3 & 1971.0 & 6190.0\\
Double Dunk &-0.3 & -20.4 & \textbf{ 23.5 } & 23.0 & 23.2 & -16.0 & -16.0 & -14.0\\
Enduro &2125.9 & 1929.8 & \textbf{ 2177.4 } & 1663.1 & 1910.1 & 81.8 & 860.0 & 803.0\\
Fishing Derby &31.3 & 38.4 & 44.4 & 66.1 & \textbf{ 68.0 } & -77.1 & -39.0 & 20.0\\
Freeway &\textbf{ 34.0 } & 31.4 & 33.7 & 32.0 & 31.7 & 0.1 & 30.0 & 32.0\\
Gopher &70354.6 & 7810.3 & \textbf{ 120500.9 } & 114702.6 & 114168.9 & 250.0 & 2412.0 & 22520.0\\
Gravitar &1419.3 & 1685.1 & 1598.5 & 4214.3 & 3920.5 & 245.5 & 3351.0 & \textbf{ 13400.0 }\\
Hero &55887.4 & 105929.4 & 31655.9 & 112042.4 & \textbf{ 114248.2 } & 1580.3 & 30826.0 & 99320.0\\
Ice Hockey &1.1 & -9.6 & \textbf{ 33.0 } & 3.4 & 32.9 & -9.7 & 1.0 & 1.0\\
James Bond &19809.0 & 2095.0 & \textbf{ 21322.5 } & 12889.0 & 16956.3 & 33.5 & 303.0 & 650.0\\
Kangaroo &14637.5 & 14681.5 & 1416.0 & 47676.5 & \textbf{ 48599.0 } & 100.0 & 3035.0 & 36300.0\\
Krull &8741.5 & 9825.3 & 11741.4 & 104160.3 & \textbf{ 140670.6 } & 1151.9 & 2666.0 & 13730.0\\
Kung Fu Master &52181.0 & 29132.0 & 97829.5 & 67957.5 & \textbf{ 137804.5 } & 304.0 & 22736.0 & 25920.0\\
Montezuma's Revenge &384.0 & 4638.4 & 2500.0 & 29384.0 & 27926.5 & 25.0 & 4753.0 & \textbf{ 34900.0 }\\
Ms. Pacman &5380.4 & 4695.7 & 11255.2 & 12857.1 & 20872.7 & 197.8 & 6952.0 & \textbf{ 55021.0 }\\
Name This Game &13136.0 & 5188.3 & 25783.3 & 24465.8 & \textbf{ 31569.4 } & 1747.8 & 8049.0 & 19380.0\\
Pitfall! &0.0 & 57.3 & -0.6 & 3996.7 & 3997.5 & -348.8 & 6464.0 & \textbf{ 47821.0 }\\
Pong &20.9 & 10.7 & 20.9 & \textbf{ 21.0 } & 20.9 & -18.0 & 15.0 & 0.0\\
Private Eye &4234.0 & 42457.2 & 49.8 & \textbf{ 100747.4 } & 100724.9 & 662.8 & 69571.0 & 72800.0\\
Q*bert &33817.5 & 21792.7 & \textbf{ 302391.3 } & 71224.4 & 91603.5 & 183.0 & 13455.0 & 99450.0\\
Riverraid &22920.8 & 18735.4 & \textbf{ 63864.4 } & 24147.7 & 47609.9 & 588.3 & 17118.0 & 39710.0\\
Road Runner &62041.0 & 50199.6 & 222234.5 & 507213.0 & \textbf{ 578806.5 } & 200.0 & 7845.0 & 20200.0\\
Seaquest &15898.9 & 12361.6 & \textbf{ 392952.3 } & 13603.8 & 318418.0 & 215.5 & 42055.0 & 101120.0\\
Solaris &3560.3 & 2616.8 & 2892.9 & 2529.8 & 3428.9 & 2047.2 & 12327.0 & \textbf{ 17840.0 }\\
Up'n'Down &125754.6 & 82555.0 & 401884.3 & 324505.2 & \textbf{ 469548.3 } & 707.2 & 11693.0 & 16080.0\\
Video Pinball &533936.5 & 19123.1 & 565163.2 & 243320.1 & \textbf{ 922518.0 } & 20452.0 & 17668.0 & 32420.0\\
Yars' Revenge &102557.0 & 61575.7 & 148594.8 & 109980.9 & \textbf{ 498947.1 } & 1476.9 & 54577.0 & 83523.0\\
\bottomrule
\end{tabularx}
\caption{Scores obtained by evaluating the best checkpoint for 200 episodes using the
no-op starts regime.}
\label{tbl:all_games_random}

\end{table*}

\begin{figure}[!h]
    \centering
    \includegraphics[width=\linewidth]{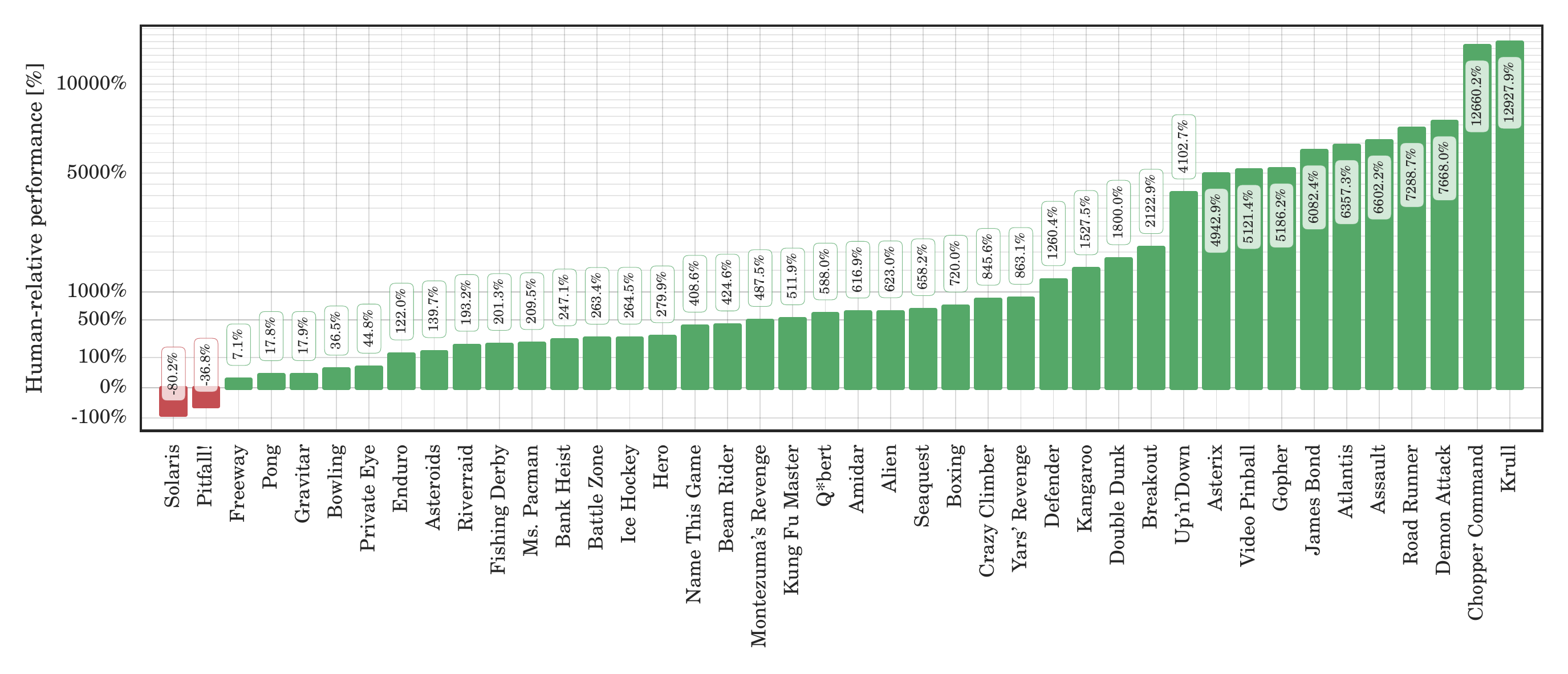}

    \caption{The human-relative score of Ape-X DQfD (deeper) using the no-ops
    starts regime. The score is computed as
    $\frac{\text{alg. score} - \text{avg. human score}}{
    \text{avg. human score} - \text{random score}} \times 100$.}
    \label{fig:relative_perf_noop}
\end{figure}

\FloatBarrier
\newpage

\begin{table*}[!h]
\tiny
\begin{tabularx}{\linewidth}{r?R-R-R?R-R?R-R-R}
\toprule
Game& Rainbow & DQfD & Ape-X DQN & Ape-X DQfD & Ape-X DQfD (deeper) & Random & Avg. Human & Expert \\
\midrule
Alien &6022.9 & -- & \textbf{ 17731.5 } & 1025.5 & 6983.4 & -- & 6371.3 & --\\
Amidar &202.8 & -- & 1047.3 & 310.5 & 1177.5 & -- & \textbf{ 1540.4 } & --\\
Assault &14491.7 & -- & 24404.6 & 23384.3 & \textbf{ 34716.5 } & -- & 628.9 & --\\
Asterix &280114.0 & -- & 283179.5 & \textbf{ 327929.0 } & 297533.8 & -- & 7536.0 & --\\
Asteroids &2249.4 & -- & \textbf{ 117303.4 } & 95066.6 & 95170.9 & -- & 36517.3 & --\\
Atlantis &814684.0 & -- & 918714.5 & 912443.0 & \textbf{ 1020311.0 } & -- & 26575.0 & --\\
Bank  Heist &826.0 & -- & 1200.8 & 1695.9 & \textbf{ 2020.5 } & -- & 644.5 & --\\
Battle Zone &52040.0 & -- & \textbf{ 92275.0 } & 42150.0 & 74410.0 & -- & 33030.0 & --\\
Beam Rider &21768.5 & -- & 72233.7 & 46454.5 & \textbf{ 82997.1 } & -- & 14961.0 & --\\
Bowling &39.4 & -- & 30.2 & \textbf{ 178.3 } & 174.4 & -- & 146.5 & --\\
Boxing &54.9 & -- & \textbf{ 80.9 } & 64.5 & 69.7 & -- & 9.6 & --\\
Breakout &379.5 & -- & \textbf{ 756.5 } & 145.1 & 365.5 & -- & 27.9 & --\\
Chopper Command &10916.0 & -- & 576601.5 & 90152.5 & \textbf{ 681202.5 } & -- & 8930.0 & --\\
Crazy Climber &143962.0 & -- & \textbf{ 263953.5 } & 141468.0 & 196633.5 & -- & 32667.0 & --\\
Defender &47671.3 & -- & \textbf{ 399865.3 } & 37771.8 & 123734.8 & -- & 14296.0 & --\\
Demon Attack &109670.7 & -- & 133002.1 & 97458.8 & \textbf{ 142189.0 } & -- & 3442.8 & --\\
Double Dunk &-0.6 & -- & \textbf{ 22.3 } & 20.5 & 21.8 & -- & -14.4 & --\\
Enduro &\textbf{ 2061.1 } & -- & 2042.4 & 1538.3 & 1754.9 & -- & 740.2 & --\\
Fishing Derby &22.6 & -- & 22.4 & \textbf{ 26.3 } & 24.0 & -- & 5.1 & --\\
Freeway &\textbf{ 29.1 } & -- & 29.0 & 23.8 & 26.8 & -- & 25.6 & --\\
Gopher &72595.7 & -- & \textbf{ 121168.2 } & 115654.7 & 115392.1 & -- & 2311.0 & --\\
Gravitar &567.5 & -- & 662.0 & 972.0 & 1021.8 & -- & \textbf{ 3116.0 } & --\\
Hero &50496.8 & -- & 26345.3 & 104942.1 & \textbf{ 107144.0 } & -- & 25839.4 & --\\
Ice Hockey &-0.7 & -- & \textbf{ 24.0 } & 3.3 & 18.4 & -- & 0.5 & --\\
James Bond &18142.3 & -- & \textbf{ 18992.3 } & 12041.0 & 15010.0 & -- & 368.5 & --\\
Kangaroo &10841.0 & -- & 577.5 & 25953.5 & \textbf{ 28616.0 } & -- & 2739.0 & --\\
Krull &6715.5 & -- & 8592.0 & 111496.1 & \textbf{ 122870.1 } & -- & 2109.1 & --\\
Kung Fu Master &28999.8 & -- & 72068.0 & 50421.5 & \textbf{ 102258.0 } & -- & 20786.8 & --\\
Montezuma's Revenge &154.0 & -- & 1079.0 & \textbf{ 22781.0 } & 22730.5 & -- & 4182.0 & --\\
Ms. Pacman &2570.2 & -- & 6135.4 & 1880.8 & 4007.4 & -- & \textbf{ 15375.0 } & --\\
Name This Game &11686.5 & -- & 23829.9 & 22874.6 & \textbf{ 29416.0 } & -- & 6796.0 & --\\
Pitfall! &-37.6 & -- & -273.3 & 3367.5 & 3208.7 & -- & \textbf{ 5998.9 } & --\\
Pong &\textbf{ 19.0 } & -- & 18.7 & 14.0 & 18.6 & -- & 15.5 & --\\
Private Eye &1704.4 & -- & 864.7 & 61895.1 & 54976.0 & -- & \textbf{ 64169.1 } & --\\
Q*bert &18397.6 & -- & \textbf{ 380152.1 } & 41419.6 & 51159.3 & -- & 12085.0 & --\\
Riverraid &15608.1 & -- & \textbf{ 49982.8 } & 18720.1 & 42288.9 & -- & 14382.2 & --\\
Road Runner &54261.0 & -- & 127111.5 & 486082.0 & \textbf{ 507490.0 } & -- & 6878.0 & --\\
Seaquest &19176.0 & -- & \textbf{ 377179.8 } & 15526.1 & 269480.0 & -- & 40425.8 & --\\
Solaris &2860.7 & -- & 3115.9 & 2235.6 & 1835.8 & -- & \textbf{ 11032.6 } & --\\
Up'n'Down &92640.6 & -- & \textbf{ 347912.2 } & 200709.3 & 298361.8 & -- & 9896.1 & --\\
Video Pinball &506817.2 & -- & \textbf{ 873988.5 } & 194845.0 & 832691.1 & -- & 15641.1 & --\\
Yars' Revenge &93007.9 & -- & 131701.1 & 82521.8 & \textbf{ 466181.8 } & -- & 47135.2 & --\\
\bottomrule
\end{tabularx}
\caption{Scores obtained by evaluating the best checkpoint for 200 episodes using the
human starts regime.}
\label{tbl:all_games_human}

\end{table*}

\begin{figure}[!h]
    \centering
    \includegraphics[width=\linewidth]{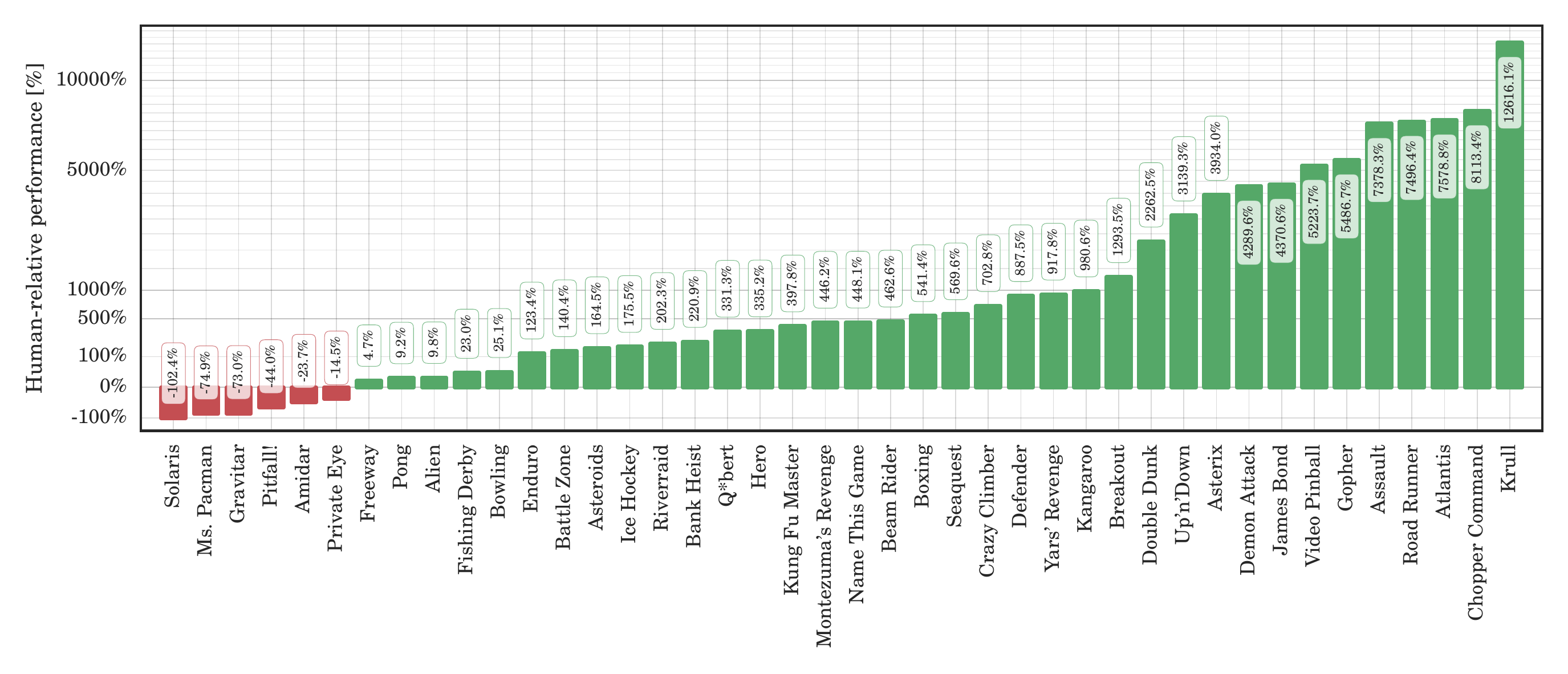}
    \caption{The human-relative score of Ape-X DQfD (deeper) using the human starts
    regime. The score is computed as
    $\frac{\text{alg. score} - \text{avg. human score}}{
    \text{avg. human score} - \text{random score}} \times 100$.}
    \label{fig:relative_perf_human}
\end{figure}

\FloatBarrier
\newpage

\section{Experimental setup \& hyper parameters}

\begin{table*}[!h]
\small
\begin{tabularx}{\linewidth}{r?R-R-R-R}
\toprule
Game & Min score & Max score & Number of transitions & Number of episodes\\
\midrule
Alien & 9690 & 29160 & 19133 & 5  \\ 
Amidar & 1353 & 2341 & 16790 & 5 \\ 
Assault & 1168 & 2274 & 13224 & 5  \\ 
Asterix & 4500 & 18100 & 9525 & 5  \\ 
Asteroids & 14170 & 18100 & 22801 & 5 \\ 
Atlantis & 10300 & 22400 & 17516 & 12  \\ 
Bank Heist & 900 & 7465 & 32389 & 7  \\
Battle Zone & 35000 & 60000 & 9075 & 5 \\ 
Beam Rider & 12594 & 19844 & 38665 & 4  \\ 
Bowling & 89 & 149 & 9991 & 5 \\ 
Boxing & 0 & 15 & 8438 & 5 \\ 
Breakout & 17 & 79 & 10475 & 9  \\ 
Chopper Command & 4700 & 11300 & 7710 & 5  \\ 
Crazy Climber & 30600 & 61600 & 18937 & 5  \\ 
Defender & 5150 & 18700 & 6421 & 5  \\ 
Demon Attack & 1800 & 6190 & 17409 & 5  \\ 
Double Dunk & -22 & -14 & 11855 & 5  \\ 
Enduro & 383 & 803 & 42058 & 5 \\ 
Fishing Derby & -10 & 20 & 6388 & 4  \\ 
Freeway & 30 & 32 & 10239 & 5  \\ 
Gopher & 2500 & 22520 & 38632 & 5  \\ 
Gravitar & 2950 & 13400 & 15377 & 5  \\ 
Hero & 35155 & 99320 & 32907 & 5  \\ 
Ice Hockey & -4 & 1 & 17585 & 5  \\ 
James Bond & 400 & 650 & 9050 & 5  \\ 
Kangaroo & 12400 & 36300 & 20984 & 5 \\ 
Krull & 8040 & 13730 & 32581 & 5  \\ 
Kung Fu Master & 8300 & 25920 & 12989 & 5  \\ 
Montezuma's Revenge & 32300 & 34900 & 17949 & 5  \\ 
Ms Pacman & 31781 & 55021 & 21896 & 3 \\
Name This Game & 11350 & 19380 & 43571 & 5  \\ 
Pitfall & 3662 & 47821 & 35347 & 5  \\ 
Pong & -12 & 0 & 17719 & 3 \\ 
Private Eye & 70375 & 74456 & 10899 & 5  \\ 
Q-Bert & 80700 & 99450 & 75472 & 5  \\ 
River Raid & 17240 & 39710 & 46233 & 5  \\ 
Road Runner & 8400 & 20200 & 5574 & 5 \\ 
Seaquest & 56510 & 101120 & 57453 & 7  \\ 
Solaris & 2840 & 17840 & 28552 & 6 \\ 
Up N Down & 6580 & 16080 & 10421 & 4\\ 
Video Pinball & 8409 & 32420 & 10051 & 5 \\ 
Yars' Revenge & 48361 & 83523 & 21334 & 4 \\ 
\bottomrule
\end{tabularx}
\caption{The table shows the performance of our expert player and the amount
of available demonstrations per game. Note that the total number of
episodes/trajectories is very low.}
\label{tbl:demos}

\end{table*}

\begin{figure}[!h]
    \centering
    \includegraphics[width=\linewidth]{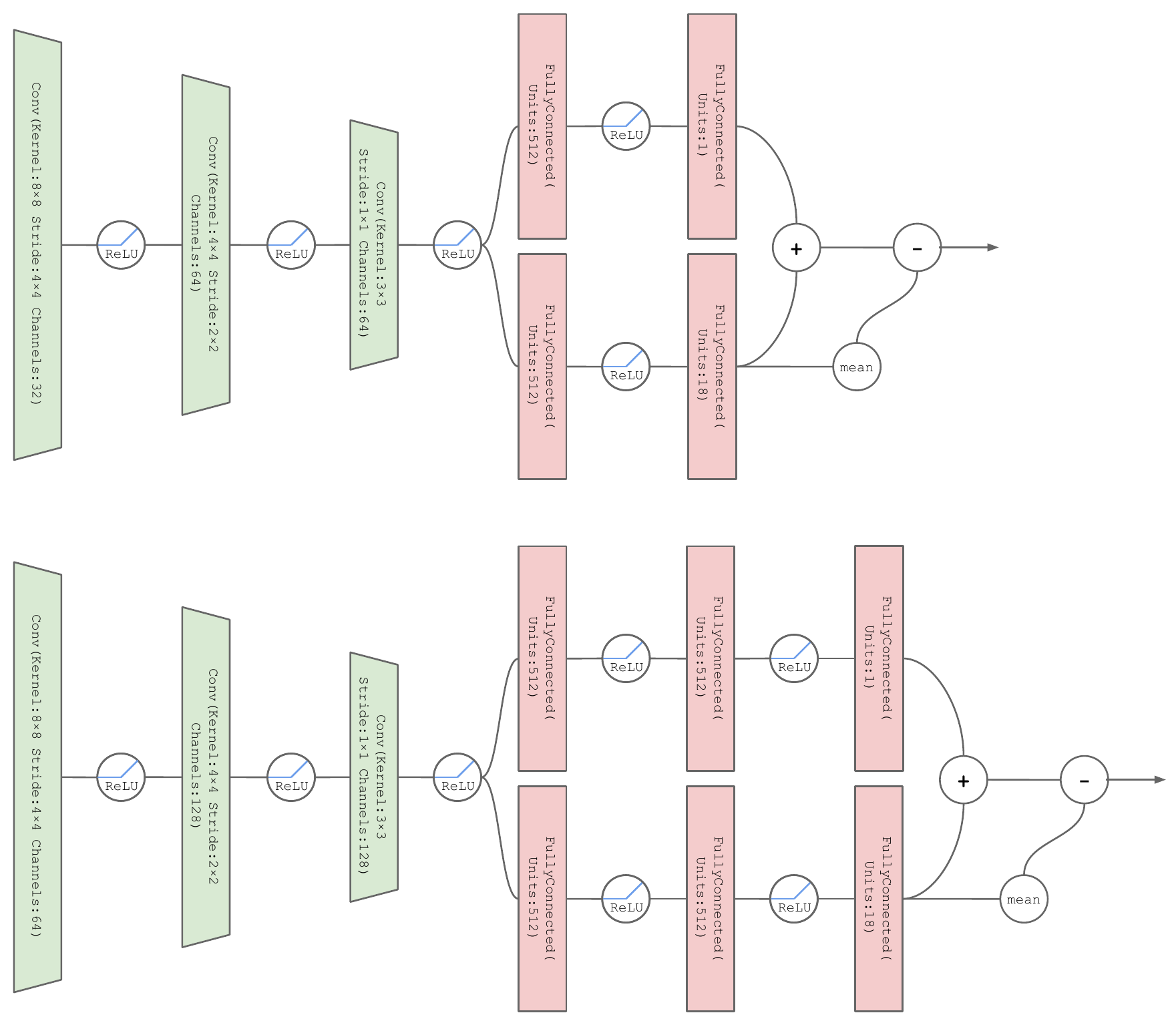}
    \caption{The two network architectures that we used. The upper one is the 
    standard dueling architecture of \citet{wang2016dueling} and the lower one
    is a slightly wider and deeper version.}
    \label{fig:networks}
\end{figure}

\begin{table*}[!h]
\small
\begin{tabularx}{\linewidth}{r?l-r}
\toprule
Parameter & Comment & Value \\
\midrule
\midrule
\multicolumn{3}{C}{Learner configuration} \\
\midrule
Batch size & & 256\\
Agent transitions per batch & & 192 \\
Expert transitions per batch & & 64 \\
Adam learning rate & & $5\cdot10^{-5}$\\
Adam regularizer & & $\frac{0.01}{256}$\\
Maximum gradient norm & We use \tt{tf.clip\_by\_global\_norm} & 40.0 \\
Target update period & Referred to as $T_\text{target}$ in the text & 2500 \\
Discount factor & Referred to as $\gamma$ in the text&  0.999 \\
Margin & Referred to as $\lambda$ in the text & $\sqrt{0.999}$ \\
\midrule
\multicolumn{3}{C}{Arcade Learning Environment (ALE) parameters} \\
\midrule
Use full Atari action set & & Yes \\
Repeat actions & & 4 \\
Expose lives & & No \\
\bottomrule
\end{tabularx}
\caption{The table shows all of our hyper parameters.}
\label{tbl:hypers}

\end{table*}

\FloatBarrier
\newpage
\section{Videos}
\label{sec:video}
We provide videos comparing an agent episode to an expert episode on YouTube.

\textbf{\textsc{Montezuma's Revenge:}} \\
\begin{small}
\texttt{https://www.youtube.com/watch?v=-0xOdnoxAFo\&index=4\&list=PLnZpNNVLsMmOfqMwJLcpLpXKLr3yKZ8Ak}
\end{small}

\textbf{\textsc{Hero:}}\\
\begin{small}
\texttt{https://www.youtube.com/watch?v=T3uKDubwzhw\&list=PLnZpNNVLsMmOfqMwJLcpLpXKLr3yKZ8Ak\&index=2}
\end{small}

\textbf{\textsc{Bowling:}}\\
\begin{small}
\texttt{https://www.youtube.com/watch?v=67x1cFnSA\_c\&index=1\&list=PLnZpNNVLsMmOfqMwJLcpLpXKLr3yKZ8Ak}
\end{small}

\textbf{\textsc{Breakout:}}\\
\begin{small}
\texttt{https://www.youtube.com/watch?v=hr\_KNsQPe7U\&list=PLnZpNNVLsMmOfqMwJLcpLpXKLr3yKZ8Ak\&index=3}
\end{small}
\end{document}